\def\BibTeX{{\rm B\kern-.05em{\sc i\kern-.025em b}\kern-.08em
    T\kern-.1667em\lower.7ex\hbox{E}\kern-.125emX}}
\pgfplotsset{compat=1.7}
\newtheorem{theorem}{Theorem}[section]
\newtheorem{lemma}[theorem]{Lemma}
\newlist{myitemize}{enumerate}{10}
\setlist[myitemize]{label*=\arabic*.,nosep,leftmargin=*}
\DeclareMathOperator{\A}{\mathbf{A}}
\DeclareMathOperator{\Ll}{\mathbf{L}}
\DeclareMathOperator{\D}{\mathbf{D}}
\DeclareMathOperator{\Dn}{\mathbf{D^{-\frac{1}{2}}}}
\DeclareMathOperator{\Pp}{\mathbf{P}}
\DeclareMathOperator{\Qq}{\mathbf{Q}}
\DeclareMathOperator{\I}{\mathbf{I}}
\DeclareMathOperator{\gt}{g_{\mathbf{\theta}}}
\DeclareMathOperator{\nr}{\mathbf{N_{n}}}
\DeclareMathOperator{\Uu}{\mathbf{U}}
\DeclareMathOperator{\UT}{\mathbf{U^{\intercal}}}
\DeclareMathOperator{\g}{\mathbf{g}}
\DeclareMathOperator{\LBD}{\mathbf{\Lambda}}
\DeclareMathOperator{\N}{\mathbf{N}}
\begin{document}

\title{Rational Neural Networks for Approximating Jump Discontinuities of Graph Convolution Operator\\
}

\author{\IEEEauthorblockN{
Zhiqian Chen\IEEEauthorrefmark{1},
Feng Chen\IEEEauthorrefmark{2}, 
Rongjie Lai\IEEEauthorrefmark{3},
Xuchao Zhang\IEEEauthorrefmark{1} and
Chang-Tien Lu\IEEEauthorrefmark{1}}
\IEEEauthorblockA{\IEEEauthorrefmark{1}Computer Science Department, Virginia Tech\\Email:\{czq,xuczhang,ctlu\}@vt.edu}
\IEEEauthorblockA{\IEEEauthorrefmark{2}Department of Computer Science, University at Albany\\Email:\{fchen5\}@albany.edu}
\IEEEauthorblockA{\IEEEauthorrefmark{3}Department of Mathematics, Rensselaer Polytechnic Institute\\Email:\{lair\}@rpi.edu}
}
\maketitle
\begin{abstract}
    For node level graph encoding, a recent important state-of-art method is the graph convolutional networks (GCN), which nicely integrate local vertex features and graph topology in the spectral domain. However, current studies suffer from several drawbacks: (1) graph CNNs relies on Chebyshev polynomial approximation which results in oscillatory approximation at jump discontinuities; (2) Increasing the order of Chebyshev polynomial can reduce the oscillations issue, but also incurs unaffordable computational cost; (3) Chebyshev polynomials require degree $\Omega$(poly(1/$\epsilon$)) to approximate a jump signal such as $|x|$, while rational function only needs $\mathcal{O}$(poly log(1/$\epsilon$))\cite{liang2016deep,telgarsky2017neural}. However, it's non-trivial to apply rational approximation without increasing computational complexity due to the denominator.
    
    In this paper, the superiority of rational approximation is exploited for graph signal recovering. RatioanlNet is proposed to integrate rational function and neural networks. We show that rational function of eigenvalues can be rewritten as a function of graph Laplacian, which can avoid multiplication by the eigenvector matrix. Focusing on the analysis of approximation on graph convolution operation, a graph signal regression task is formulated. Under graph signal regression task, its time complexity can be significantly reduced by graph Fourier transform. To overcome the local minimum problem of neural networks model, a relaxed Remez algorithm is utilized to initialize the weight parameters. Convergence rate of RatioanlNet and polynomial based methods on jump signal is analyzed for a theoretical guarantee. The extensive experimental results demonstrated that our approach could effectively characterize the jump discontinuities, outperforming competing methods by a substantial margin on both synthetic and real-world graphs.
    
\end{abstract}

\section{Introduction}
Effective information analysis generally boils down to the geometry of the data represented by a graph. Typical applications include social networks\cite{lazer2009life}, transportation networks\cite{bell1997transportation}, spread of epidemic disease\cite{newman2002spread}, brain's neuronal networks\cite{marx2012high}, gene data on biological regulatory networks\cite{davidson2002genomic}, telecommunication networks\cite{drew2008diagnosing}, knowledge graph\cite{lin2015learning}, which are lying on non-Euclidean graph domain. To describe the geometric structures, graph matrices such as adjacency matrix or graph Laplacian can be employed to reveal latent patterns.

\begin{figure}[h]
	\begin{subfigure}
		\centering
		\includegraphics[width=1.68in]{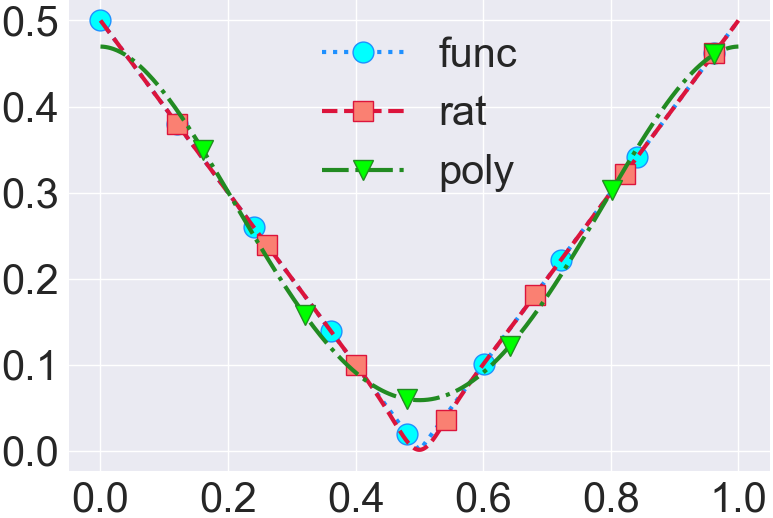}
	\end{subfigure}
	\begin{subfigure}
		\centering
		\includegraphics[width=1.68in]{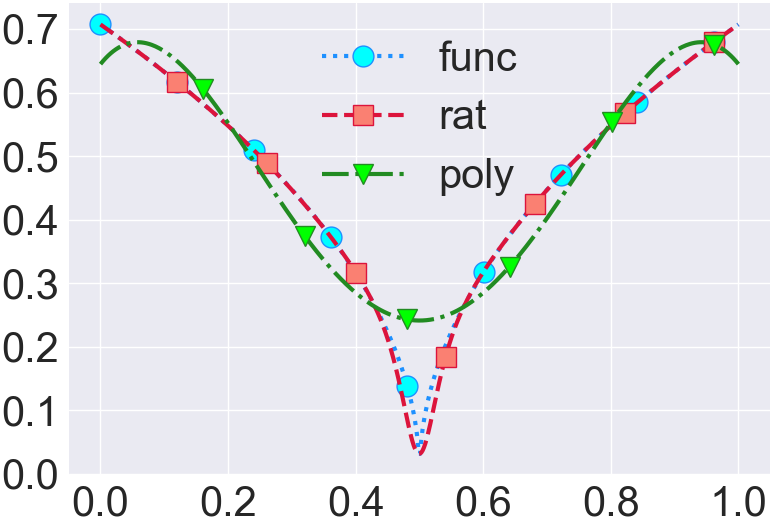}
	\end{subfigure}\\
	\begin{subfigure}
		\centering
		\includegraphics[width=1.68in]{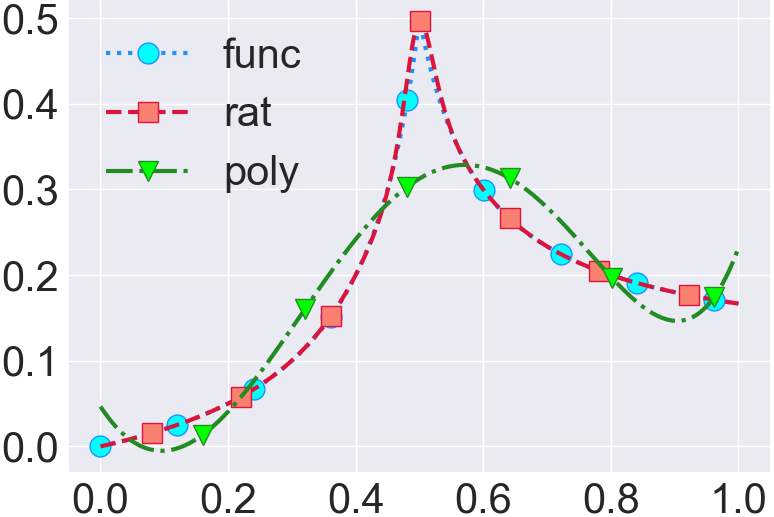}
	\end{subfigure}
	\begin{subfigure}
		\centering
		\includegraphics[width=1.68in]{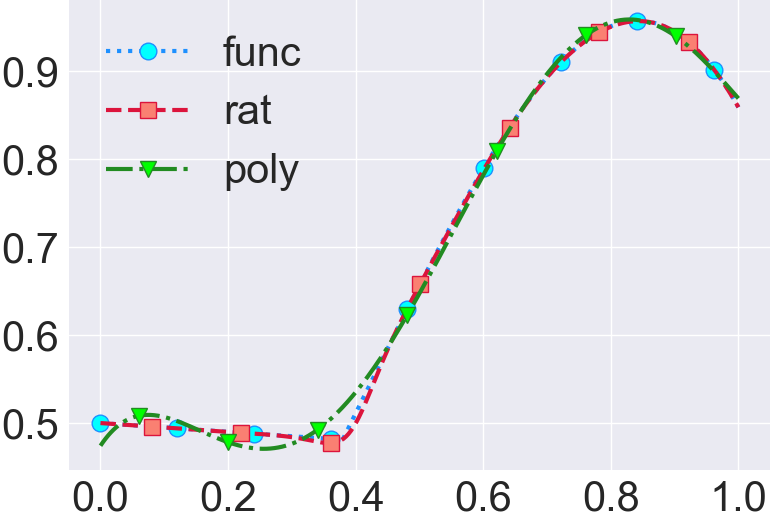}
	\end{subfigure}
	\caption{Rational(rat) and polynomial(poly) approximation for several jump functions(func). 
		Top left:$\sqrt{|x-0.5|}$; top right: $|x-0.5|$; bottom left: $\frac{x}{10|x - 0.5|+1}$; bottom right:    $max(0.5, sin(x+x^{2}))-\frac{x}{20}$}
	\label{fig:fit_example}
\end{figure}
In recent years, many problems are being revisited with deep learning tools. Convolutional neural networks(ConvNets) emerging in recent years are at the heart of deep learning, and the most prominent strain of neural networks in research. ConvNets have revolutionized computer vision\cite{krizhevsky2012imagenet}, natural language processing\cite{collobert2008unified}, computer audition\cite{graves2013speech}, reinforcement learning\cite{mnih2015human,silver2016mastering}, and many other areas. However, ConvNets are designed for grid data such as image, which belongs to the Euclidean domain. Graph data is non-Euclidean which makes it difficult to employ typical ConvNets.
To bridge the gap, Bruna et al. \cite{bruna2013spectral}\cite{henaff2015deep} generalized spectral convolutional operation which requires expensive steps of spectral decomposition and matrix multiplication. Hammond et al.\cite{hammond2011wavelets} first introduced truncated Chebyshev polynomial for estimating wavelet in graph signal processing. Based on this polynomial approximation, Defferrard et al.\cite{defferrard2016convolutional} designed ChebNet which contains a novel neural network layer for the convolution operator in the spectral domain. Kipf and Welling\cite{kipf2017semi} simplified ChebNet by assuming the maximum of eigenvalues is 2 and fixing the order to 1, which boosts both effectiveness and efficiency. Li et al.\cite{DBLP:journals/corr/abs-1801-07606} found that this simplified ChebNet is an application of Laplacian smoothing, which implies that current studies are only effective on the smooth signal.

Current studies on graph ConvNet heavily rely on polynomial approximation, which makes it difficult to estimate jump signals.  Fig. \ref{fig:fit_example} shows the behaviors of polynomial and rational function on jump discontinuity: rational approximation fits the functions considerably better than polynomials. It is widely recognized that \textbf{(1)} polynomial approximation suffers from Gibbs phenomenon, which means polynomial function oscillate and overshoot near discontinuities\cite{trefethen2013approximation}; \textbf{(2)} Applying a higher order of polynomials could dramatically reduce the oscillation, but also incurs an expensive computational cost. \textbf{(3)} Polynomials require degree $\Omega$(poly(1/$\epsilon$)) to approximate functions near singularities and on an unbounded domain, while rational functions only need $\mathcal{O}$(poly log(1/$\epsilon$)) to achieve $\epsilon$-close\cite{liang2016deep,telgarsky2017neural}. However, it is non-trivial to apply rational approximation. Polynomial-based method can easily transfer the function on eigenvalues to the same function on graph Laplacian so that matrix multiplication by eigenvector can be avoided. It is not easy for rational approximation to do so due to the additional denominator.

In this paper, the advantage of the rational function in approximation is transferred to spectral graph domain. Specifically, we propose a rational function based neural networks(RationalNet), which can avoid matrix multiplication by eigenvectors. To alleviate the local minimum problem of the neural network, a relaxed Remez algorithm is employed for parameter initialization. Our theoretical analysis shows that rational functions converge much faster than polynomials on jump signal. In a nutshell, the key innovations are:

\begin{itemize}
	\item \textbf{Propose a neural network model based on rational function for recovering jump discontinuities}: To estimate the jump signal, our proposed method integrates rational approximation and spectral graph operation to avoid matrix multiplication by eigenvectors. For graph signal regression task, expensive matrix inversion can be circumvented by graph Fourier transform.
	\item \textbf{Develop an efficient algorithm for model parameters optimization}: Remez algorithm is theoretically optimal, but it is often not practical especially when approximating discrete signal. To alleviate this issue, the stopping rules of Remez algorithm are relaxed to initialize the neural networks parameters.
	\item \textbf{Provide theoretical analysis for the proposed method on jump signal}: For understanding the behaviors of polynomial and rational function on jump discontinuities, a uniform representation is proposed to analyze convergence rate regarding the order number theoretically.
	\item \textbf{Conducting extensive experiments for performance evaluations}\footnote{The code and datasets will be released after the acceptance for replication}: The proposed method was evaluated on synthetic and real-world data. Experimental results demonstrate that the proposed approach runs efficiently and consistently outperforms the best of the existing methods.
\end{itemize}

The rest of the paper is organized as follows. Section \ref{sec:related_work} reviews existing work in this area. Necessary preliminary is presented in section \ref{sec:prelim}. Section \ref{sec:method} elaborates a rational function based model for graph signal regression task. Section \ref{sec:algorithm} presents algorithm description and theoretical analysis. In Section \ref{sec:evaluation}, experiments on synthetic and real-world data are analyzed. This paper concludes by summarizing the study's important findings in Section \ref{sec:conclusion}.

\section{Related Work}\label{sec:related_work}
The proposed method draws inspiration from the field of approximation theory, spectral graph theory and recent works using neural networks. In what follows, we provide a brief overview of related work in these fields.

\subsection{Approximation theory}
In mathematics, approximation theory is concerned with how functions can best be approximated with simpler functions, and with quantitatively characterizing the errors introduced thereby. One problem of particular interest is that of approximating a function in a computer mathematical library, using operations that can be performed on the computer or calculator, such that the result is as close to the actual function as possible. This is typically done with polynomial or rational approximations. Polynomials are familiar and comfortable, but rational functions seem complex and specialized, and rational functions are more powerful than polynomials at approximating functions near singularities and on unbounded domains. Basic properties of rational function are described in books of complex analysis\cite{ahlfors1953complex,trefethen2013approximation,pachon2010algorithms,powell1981approximation,cohen2011numerical,petrushev2011rational,achieser2013theory,ziegel1987numerical,boyd2001chebyshev,mason2002chebyshev,remez1934determination}. 

\subsection{Spectral graph theory}
Spectral graph theory is the study of the properties of a graph in relationship to the characteristic polynomial, eigenvalues, and eigenvectors of matrices associated with the graph, such as its adjacency matrix or Laplacian matrix\cite{chung1997spectral,grone1990laplacian,das2004laplacian}. Many graphs and geometric convolution methods have been proposed recently. The spectral convolution methods (\cite{bruna2013spectral,defferrard2016convolutional,kipf2017semi,bronstein2017geometric}) are the mainstream algorithm developed as the graph convolution methods. Because their theory is based on the graph Fourier analysis (\cite{shuman2013emerging, shuman2016vertex}). The polynomial approximation is firstly proposed by \cite{hammond2011wavelets}. Inspired by this, graph convolutional neural networks (GCNNs) (\cite{defferrard2016convolutional}) is a successful attempt at generalizing the powerful convolutional neural networks (CNNs) in dealing with Euclidean data to modeling graph-structured data. Kipf and Welling proposed a simplified type of GCNNs\cite{kipf2017semi}, called graph convolutional networks (GCNs). The GCN model naturally integrates the connectivity patterns and feature attributes of graph-structured data and outperforms many state-of-the-art methods significantly. 
Li et al.\cite{DBLP:journals/corr/abs-1801-07606} found that GCN is actual an application of Laplacian smoothing, which is inconsistent with GCN's motivation. In sum, this thread of work calculates the average of vertexes within Nth-order neighbors.

In this paper, we focus on the effectiveness of approximation technique on graph signal. Under graph signal regression task, the superiority of rational function beyond polynomial function is analyzed, and a rational function based neural network is proposed. 

\section{Preliminaries}\label{sec:prelim}

We focus processing graph signals defined on undirected graphs $\mathcal{G} = (\mathcal{V}, \mathcal{E}, \mathcal{W})$, where $\mathcal{V}$ is a set of n vertexes, $\mathcal{E}$ represents edges and
$\mathcal{W} = [w_{ij}] \in \{0,1\}^{n\times n}$ is an unweighted adjacency matrix. A signal $x : \mathcal{V} \rightarrow \mathbb{R}$ defined on the nodes may be regarded as a vector $x \in \mathbb{R}^{n}$.
Combinatorial graph Laplacian\cite{chung1997spectral} is defined as $\Ll= D-\mathcal{W} \in \mathbb{R}^{n\times n}$ where $D$ is degree matrix.

As $\Ll$ is a real symmetric positive semidefinite matrix, it has a
complete set of orthonormal eigenvectors and their associated ordered real nonnegative eigenvalues identified as the frequencies of the graph. The Laplacian is diagonalized by the Fourier basis $\UT$: $\Ll = \Uu \Lambda \UT$ where $\Lambda$ is  the diagonal matrix whose diagonal elements are the corresponding eigenvalues, i.e., ${\displaystyle \Lambda _{ii}=\lambda _{i}}$. The graph Fourier transform of a signal $x\in \mathbb{R}^{n}$ is defined as $\hat{x}=\UT x \in \mathbb{R}^{n}$ and its inverse as $x=\Uu \hat{x}$\cite{shuman2013emerging, shuman2016vertex, zhu2012approximating}. To enable the formulation of fundamental operations such as filtering in the vertex domain, the convolution operator on graph is defined in the Fourier domain such that $f_{1}*f_{2}=\Uu \left[\left(\UT f_{1} \right) \odot \left(\UT f_{2}\right)\right]$, where $\odot$ is the element-wise product, and $f_{1}/f_{2}$ are two signals defined on vertex domain. It follows that a vertex
signal $f_{2}=x$ is filtered by spectral signal $\hat{f_{1}}=\UT f_{1}=\g$ as:
\begin{equation*}
\small
\g * x = \Uu \left[\g(\LBD)\odot \left(\UT f_{2}\right)\right] = \Uu \g(\LBD) \UT x.
\end{equation*}
Note that a real symmetric matrix $\Ll$ can be decomposed as $\Ll=\Uu \LBD \Uu^{-1} =  \Uu \LBD \UT$ since $\Uu^{-1}=\UT$ . D. K. Hammond et al. and Defferrard et al.\cite{hammond2011wavelets, defferrard2016convolutional} apply polynomial approximation on spectral filter $\g$ so that:
\begin{align*}
\small
& \g * x = \Uu \g (\LBD) \UT x &&\\
\approx&\Uu \sum_{k}^{}\theta_{k} T_{k}(\tilde{\LBD}) \UT x && {(\tilde{\LBD}=\frac{2}{\lambda_{max}}\LBD-\I_{\N})}\\
=&\sum_{k}^{}\theta_{k} T_{k}(\tilde{\Ll}) x &&{(\Uu\LBD^{k}\UT=(\Uu\LBD\UT)^{k})}\\
\end{align*}
Kipf and Welling\cite{kipf2017semi} simplifies it by:
\begin{align*}
\small
&\g * x &&\\
\approx& \theta_{0}\I_{\N}x+\theta_{1}\tilde\Ll x &&({\scriptstyle\text{expand to 1st order)}}\\
=&\theta_{0}\I_{\N}x+\theta_{1}(\frac{2}{\lambda_{max}}\Ll-\I_{\N})) x &&{\scriptstyle(\tilde\Ll=\frac{2}{\lambda_{max}}\Ll-\I_{\N}))} \\
=&\theta_{0}\I_{\N}x+\theta_{1}(\Ll-\I_{\N})) x &&{\scriptstyle(\lambda_{max}=2)} \\
=&\theta_{0}\I_{\N}x-\theta_{1} \Dn\A\Dn x &&{\scriptstyle(\Ll=\I_{\N}-\Dn\A\Dn)} \\
=&\theta_{0}(\I_{\N} + \Dn\A\Dn) x &&{\scriptstyle(\theta_{0}=-\theta_{1})} \\
=&\theta_{0}(\tilde\D^{-\frac{1}{2}}\tilde\A\tilde\D^{-\frac{1}{2}}) x &&{\scriptstyle(\text{renormalization}: \tilde\A=\A+\I_{\N},}\\
&&&{\scriptstyle \tilde\D_{ii}=\sum_{j} \A_{ij})}, \\
\end{align*}rewrite the above GCN in matrix form: $\gt*X \approx(\tilde\D^{-\frac{1}{2}}\tilde\A\tilde\D^{-\frac{1}{2}}) X\Theta$, which leads to \textit{symmetric normalized Laplacian} with raw feature. GCN has been analyzed GCN in \cite{DBLP:journals/corr/abs-1801-07606} using smoothing Laplacian\cite{taubin1995signal}:
$y= (1-\gamma) x_{i} + \gamma \sum_{j}\frac{\tilde a_{ij}}{d_{i}}x_{j} 
= x_{i} - \gamma (x_{i}-\sum_{j}\frac{\tilde a_{ij}}{d_{i}}x_{j}),
$where $\gamma$ is a weight parameter between the current vertex $x_{i}$ and the features of its neighbors $x_{j}$, $d_{i}$ is degree of $x_{i}$, and $y$ is the smoothed Laplacian. This smoothing Laplacian has a matrix form：
\begin{align*}
Y&= x -\gamma \tilde\D^{-1}\tilde\Ll x&& \\
&= (\I_{\N}-\tilde\D^{-1}\tilde\Ll )x && (\gamma=1)\\
&= (\I_{\N}-\tilde\D^{-1}(\tilde\D-\tilde\A))x && (\tilde\Ll=\tilde\D-\tilde\A)\\
&= \tilde\D^{-1}\tilde\A x.&& \label{eq:smooth_laplacian}
\end{align*}

The above formula is \textit{random walk normalized Laplacian} as a counterpart of \textit{symmetric normalized Laplacian}. Therefore, GCN is nothing but a first-order Laplacian smoothing which averages neighbors of each vertex.

\section{Model description}\label{sec:method}
This section formally defines the task of graph signal recovering and then describes our proposed RationalNet which aims to characterize the jump signal in spectral domain.

\subsection{Problem Setting}\label{sec:ps}
All the related works integrate graph convolution estimator and fully-connected neural layers for a classification task. This classification can be summarized as:
\begin{equation}
Y=f(\mathcal{G}, x)\Theta,
\label{classfication_task}
\end{equation}where $\Theta$ indicates the parameters of normal neural network layers connecting the output of $f$ and the label $Y$, such as fully-connected layers and softmax layers for classification. And $f$ is a neural network layer implemented by approximation techniques. However, whether the success is due to the neural networks($\Theta$) or the convolution approximation method($f$) remains unknown. To focus on the analysis of approximation on $f$, a graph signal regression task is proposed to evaluate the performance of the convolution approximators $f$. Regression task directly compares the label and the output of $f$, removing the distraction of $\Theta$.

Given a graph $\mathcal{G}$, raw feature $x$, and training signal on the part of vertexes, $Y_{train}$, our goal is to recover signal values, $Y_{test}$, on test nodes. Formally, we want to find a $f(\cdot)$ so that: 

\begin{equation*}
Y=f(\mathcal{G}, x).
\end{equation*}
If the raw features are good enough for the regression task, whether the effectiveness is due to $f$ or $x$ is difficult to verify. Therefore, one reasonable option for $x$ is the uniform signal in spectral domain. Specifically, $x=\sum_{i}^{} \Uu_{i}$ and $\hat{x}=\UT x=\mathds{1}=\{1,1,...,1\}$, which means that $x$ represents eigenbasis of graph structure in spectral domain. Each entry of vector $\hat{x}_{i}$ indicates one eigenvector in the spectral domain. The physical meaning of the convolution operation is how to select eigenbasis in spectral domain to match the graph signal $Y$. Representing $\mathcal{G}$ with graph Laplacian, the regress task can be rewritten as:

\begin{equation}
Y=f(\Ll, \Uu)=\Uu\gt(\LBD)\UT \sum_{i}\Uu,
\label{regression_task}
\end{equation}where $\gt$ is the spectral filter to approximate. 


\subsection{RationalNet}
Similar to polynomial approximation on graph domain such as ChebNet\cite{defferrard2016convolutional} or GCN\cite{kipf2017semi}, RationalNet approximates the spectral filter by a widely used type of rational function, i.e., Pad\'{e} approximator, which is defined as:
\begin{equation}
\small
R(x)=\frac{\sum_{i=0}^{m} \psi_{i}x^{i}}{\sum_{j=0}^{n}\phi_{j}x^{j}}, \phi_{0}=1, \phi_{j},\psi_{i}\in \mathbb{R}.
\label{eq:rat_define}
\end{equation}
 Applying to graph convolution operator, we have:

\begin{flalign*}
\small
&\gt * x=\Uu \gt \UT x && (\text{convolution theorem})\\
\approx&\Uu \frac{{\mathlarger{\sum}}_{i=0}^{m}\psi_{i} \tilde{\LBD}^{i}}{1+{\mathlarger{\sum}}_{j=1}^{n}\phi_{j} \tilde{\LBD}^{j}}  \UT x &&  {(\tilde{\LBD}=\frac{\LBD}{\lambda_{max}})} \\
=&\Uu \frac{\Pp(\LBD)}{\Qq(\LBD)} \UT x, &&(\text{define P and Q})
\end{flalign*}where $\LBD$ represents a diagonal matrix whose entries are eigenvalues, $\gt=R$, and $\theta=\{\psi, \phi\}$. The division $\frac{P}{Q}$ is element-wise. The inverse of matrix Q(x) is equivalent to applying reciprocal operation on its diagonal entries, so the equation can be rewritten as:
\begin{equation*}
\Uu \Pp(\LBD) \Qq(\LBD)^{-1} \UT x.
\end{equation*}Applying matrix rules, it's easy to have:
\begin{flalign*}
\small
=&\Uu \Pp(\LBD) \UT \Uu \Qq(\LBD)^{-1} \UT x &&{\scriptstyle(\UT=\Uu^{-1},\UT\Uu=\I_{\N})} \\
=&\big[\Uu \Pp(\LBD) \UT\big] \big[\Uu\Qq(\LBD)^{-1} \UT\big] x && \\
=&\big[\Pp(\Uu \LBD \UT)\big] \big[\Uu\Qq(\LBD)^{-1} \UT\big] x &&{\scriptstyle(\Uu\LBD^{k}\UT=(\Uu\LBD\UT)^{k})} \\
=&\big[\Pp(\Uu \LBD \UT)\big] \big[(\Qq(\LBD)\Uu^{-1})^{-1} \UT\big] x &&{\scriptstyle(A^{-1}B^{-1}=(BA)^{-1})} \\
=&\big[\Pp(\Uu \LBD \UT)\big] \big[(\Uu\Qq(\LBD)\Uu^{-1})^{-1}\big] x &&{\scriptstyle(\UT = \Uu^{-1})} \\
=&\big[\Pp(\Uu \LBD \UT)\big] \big[(\Uu\Qq(\LBD)\UT)^{-1}\big] x &&{\scriptstyle(\UT = \Uu^{-1})} \\
=&\big[\Pp(\Uu \LBD \UT)\big] \big[(\Qq(\Uu\LBD\UT))^{-1}\big] x &&{\scriptstyle(\Uu\LBD^{k}\UT=(\Uu\LBD\UT)^{k})} \\
\end{flalign*} Since $\Uu\LBD^{k}\UT=(\Uu\LBD\UT)^{k}$, we can rewrite the equation above as:
\begin{equation}
\small
\gt * x = \Pp(\Ll) \Qq(\Ll)^{-1} x,
\label{eq:ratnet}
\end{equation} 
where $\Pp (x)=\sum_{i=0}^{m} \psi_{i}x^{i}$ and $\Qq (x)=\sum_{j=0}^{n} \phi_{j}x^{j}$. Note that $\Pp(\Ll) \Qq(\Ll)^{-1} x=\Qq(\Ll)^{-1} \Pp(\Ll)  x$ in our case. Based on polynomial approximation, RationalNet only adds a inverse polynomial function $\Qq(\Ll)^{-1}$. Therefore, polynomial approximation (GCN/ChebNet) on graph is a special case of RationalNet when $\Qq(\Ll)^{-1}=\I$. 

Computing inverse of matrix is still of high complexity $\mathcal{O}(n^{3})$ (Gauss–Jordan elimination method) in Eq. \ref{eq:ratnet}, especially for large matrix. This can be avoided by transferring vertex graph signal and raw features into spectral domain. Therefore, the Eq. \ref{eq:regress} can be rewritten as:
\begin{equation}
\hat{Y}=\frac{\Pp(\LBD)}{\Qq(\LBD)} \hat{x},
\label{eq:ratnet2}
\end{equation}where $\hat{Y}=\UT Y$ is the graph Fourier transform of graph signal, and  $\hat{x}=\UT x$ is the graph Fourier transform of raw feature. By this step, we only need compute reciprocal of eigenvalues, rather than computing matrix multiplication and  inversion at each update. Eq. 2 can be obtained via left multiplying both sides of Eq. \ref{regression_task} by transpose of eigenvectors. Note that Eq. \ref{eq:ratnet2} is applicable when there is no other layers between the output $Y$ and the convolution operation. In contrast, Eq. \ref{eq:ratnet} can be used not only in regression task like Eq. \ref{regression_task}, but also in classification where there exist additional neural networks as described in Eq. \ref{classfication_task}. RationalNet has complexity $\mathcal{O}(| \mathcal{E}|)$ for Eq. \ref{eq:ratnet2} and $\mathcal{O}(| \mathcal{E}|^{3})$ for Eq. \ref{eq:ratnet}.

%
%

%

\subsection{Relaxed Remez Algorithm for initialization}
RationalNet is powerful at approximating a function. However, it is often stuck in a local optimum due to the neural network optimization,  which makes rational function not always better than the polynomial approximation. Remez exchange algorithm\cite{remez1934determination} is an iterative algorithm used to find simple approximations to functions. Nevertheless, the drawback of this minimax algorithm is that the order for optimum is unknown and the stopping condition is not often practical. To improve RationalNet's initialization, a relaxed Remez algorithm is proposed.  

As Theorem \textbf{24.1} (equioscillation characterization of best approximants, \cite{trefethen2013approximation}) states: Given the order of numerator(m) and denominator(n), and a real function $f$ that is continuous in [p, q], there exists a unique best rational approximation  $R^{*}$ defined as Eq. \ref{eq:rat_define}. This means that the $R^{*}$ optimizes the minimax error:  

\begin{equation}
\small
R^{*}= \arg\min_{R} \max_{x\in [p, q]} | f-R |.
\end{equation}

A rational function $R$ is equal to $R^{*}$ iff $f-R$ equioscillates between at least m+n+2 extreme points, or say the error function attains the absolute maximum value with alternating sign:

\begin{equation}
\small
f(x_{d})-R(x_{d})=(-1)^{d}E, d\in [0, m+n+1],
\label{eq:equioscillates}
\end{equation}where $d$ indicates the index of data point, and $E$ represents the max of residuals: $E=max_{x_{d}}|f(x_{d})-R(x_{d})|$. For rational function approximation, there is some nonlinearity because there will be a product of $E$ with $\phi_{j}$ in the equations. Hence, these equations need to be solved iteratively. The iteration formula can be defined by linearizing the equations:  

{\small
    \begin{equation}
    \sum ^{m}_{i=0}\psi_{i}x^{i}_{d}-\left[ f\left( x_{d}\right) -\left( -1\right) ^{d}E_{r}\right] \sum ^{k}_{j=1}\phi_{j}x^{j}_{d}=f\left( x_{d}\right) -\left( -1\right) ^{d}E_{r+1},
    \label{eq:remez_mat}
    \end{equation}
}where $r$ indicate the iteration index. Eq. \ref{eq:remez_mat} is obtained by neglecting nonlinear terms of the form $(E_{r}-E_{r+1})\phi_{j}x_{d}^{j}$ in Eq. \ref{eq:equioscillates}. This procedure can converge in a reasonable time (\cite{antiabook2004}). Expanding Eq. \ref{eq:remez_mat} for all sampled data points $x_{0}, x_{1}, ..., x_{d}$, it can be rewritten as:

{\tiny
    \begin{gather}
    \begin{bmatrix} 
    x_{0}^{0} & ... &x_{0}^{m} & (E_{r}-y_{0})x_{0}^{1}  & ... & (E_{r}-y_{0})x_{0}^{n} & (-1)^{0} \\
    x_{1}^{0} & ... &x_{1}^{m} & (E_{r}-y_{0})x_{1}^{1}  & ... & (E_{r}-y_{0})x_{1}^{n} & (-1)^{1} \\
    x_{2}^{0} & ... &x_{2}^{m} & (E_{r}-y_{0})x_{2}^{1}  & ... & (E_{r}-y_{0})x_{2}^{n} & (-1)^{2} \\
    ...\\
    x_{K}^{0} & ... &x_{K}^{m} & (E_{r}-y_{0})x_{K}^{1}  & ... & (E_{r}-y_{0})x_{K}^{n} & (-1)^{K} \\
    \end{bmatrix}
    \begin{bmatrix} 
    \psi_{0} \\ \psi_{1} \\ \psi_{2} \\ ..\\ \psi_{m} \\ \phi_{1} \\ \phi_{2} \\ \phi_{3} \\ .. \\ \phi_{n} \\ E_{r+1}
    \end{bmatrix}
    =
    \begin{bmatrix}
    y_{0}\\ y_{1}\\ ... \\ y_{K}
    \end{bmatrix},
    \label{eq:remez_mat_expand}
    \end{gather}
}where K=m+n+1. Starting from an assumed initial guess $E_{r=0}$， this set of linear equations can be solved for the unknown $\psi_{i}$, $\phi_{j}$ and $E_{r+1}$, when two successive values of $E_{r}$ are in satisfactory agreement such as $|E_{r+1}-E_{r}|$ is less than 1e-6. Constructing a rational function with new coefficients,  Remez computes the error residuals. If absolute value of the residuals $\delta$ are not great than $|E|$, the optimal coefficients are obtained. Otherwise, Remez calculates the roots of rational function and constructs a new set of control points by collecting the extremes in each interval of roots, and repeat the computation of Eq. \ref{eq:remez_mat} until residuals $\delta$ are not great than $|E|$. However, this stopping rule is not often satisfied, which makes the algorithm stuck in dead loop. Therefore, we add an iteration limit for avoid dead loop. The relaxed Remez algorithm could be summarized as follows:
\begin{myitemize}
    \item \textbf{Prepare training data}
    \begin{itemize}
        \item Specify the degree of interpolating rational function.
        \item Pick m + n + 2 points from the data points $X=\{x_{0}, x_{1}, ..., x_{m+n+1}\}$ with equal interval. Under this discrete setting, the distances between any neighbors are considered equal if the data distribution are dense
    \end{itemize}
    \item \textbf{Optimization by equioscillation constraint}
    \begin{itemize}
        \item Solve coefficients and $E$ by Eq. \ref{eq:remez_mat_expand}
        \item Form a new rational function $R$ with new coefficients
        \item Calculate residual errors
        \item Repeat until E converges or $|E_{r+1}-E_{r}|$ is less than 1e-6
    \end{itemize}
    \item \textbf{Check stopping rule}
    \begin{itemize}
        \item Calculate residual errors
        \item Stops if the absolute value of any residual is not numerically greater than $|E|$. 
        \item Otherwise, find the n+m+1 roots of the rational function, and find the points at which the error function attains its extreme value. Rerun the algorithm with this new set of training data from the second step.
    \end{itemize}
\end{myitemize}

We have considered an algorithm for obtaining minimax approximation when the function could be evaluated at any point inside the interval. In our case, the function is known only at a set of discrete points, since eigenvalues are not continuous. However, this problem is no essentially different form the continuous case if the set of points is rather dense in the target interval [p, q]. We simply assume that eigenvalue samples are dense enough, since we often normalized eigenvalues into the range [0,1], several hundreds of points are thereby sufficient. For example, our smallest size of the synthetic graph consists of 500 nodes, so there are 500 eigenvalues distributed in [0, 1], which should be enough for approximation.

If the degree of rational function is large, then the system of equations could be ill-conditioned. Sometime, the linear system of equations \ref{eq:remez_mat_expand} is singular, which make the solution vector($\psi_{i}$, $\phi_{j}$, $E_{r+1}$) under-determined. We traverse all possible m/n pairs given the maximum of m and n. The relaxed algorithm discards any m/n if singular matrix error occurs. 

We found that the residuals $\delta$ are not smaller than $|E|$ for some m/n pairs, and the algorithm continues to output the same values. In such case, the algorithm stops if the maximum and minimum residuals ($\delta_{min, max}$) converge or they satisfy $\delta_{0,1,...,m+n+1}<|E|$.

\section{Algorithm and Theoretical analysis}\label{sec:algorithm}
This section elaborates algorithm details and analyzes its convergence rate on jump discontinuity.

\subsection{Algorithm description}

The Algorithm 1 first calculate graph Laplacian(line 1) and spectral decomposition(line 3), and convert vertex signal into spectral domain by inverse Fourier transform(line 2). From given m,n, algorithm 1 traverse all possible m/n pairs (line 4). Picking up m+n+1 points with equal intervals, the optimal error is calculated (line 10). After convergence, optimal m/n and ψ i /ϕ j are determined (line 11). Then algorithm calculates the residuals(line 13). If the stopping rule is not satisﬁed, decrease the order of denominator or numerator in turns and repeat the same process, otherwise, output the parameters of rational function.

With optimal parameters, graph convolution operation is calculated by rational approximation (line 19). Then we conduct typical neural networks optimization.

\begin{algorithm}[!h]
    \caption{RationalNet}
    \label{algo:fgan}
    \SetAlgoLined
    \KwIn{a graph $\mathcal{G}=\{\mathcal{V}, \mathcal{E}\}$,\\
        rational function order: m, \\
        graph signal on nodes: Y(i), i $\in$ {$1, 2, ..., |\mathcal{V}|$} }
    \KwOut{a rational function with parameters: $\psi_{i}$ and $phi_{i}$ }
    compute graph Laplacian: $\Ll=A-D$ \label{algo:graph_lap}\\
    compute spectral signal by graph Fourier transform: $\hat{Y}=\UT Y$ \label{algo:inv_fourier} \\
    perform eigen decomposition: $\Ll = \Uu \Lambda \UT$ \label{algo:spectral_decomp}\\
    n $\leftarrow$ m    \label{algo:mn}\\
    // initialize parameters by a relexed Remez \\
    \Repeat{$\delta$ convergence or $\delta_{min, max}<|E|$}{
        Pick m + n + 1 points ${x_{0}, x_{1},...,x_{m+n+1}}$ from full data $X$ with equal interval \\
        r = 0, $E_{r}$ = 0 \\ 
        \Repeat{$E_{r+1} - E_{r}$ convergence}{
            solve $\psi_{0\sim m}, \phi_{1\sim n}, E_{r+1}$ \Comment{Eq. \ref{eq:remez_mat} or \ref{eq:remez_mat_expand} } \label{algo:remez_standard}\\
        }\label{algo:remez_standard_converge}
        form a Pad\'{e} rational function $R_{\psi,\phi}$ with $\psi_{0\sim m}, \phi_{1\sim n}$\\
        compute residues $\delta_{d} = |\hat Y(d)-R(x_{d})|    $\label{algo:compute_residual}\\
        m $\leftarrow$ m-1 or n $\leftarrow$ n-1 in turns.\\
    }
    // initialize a Pad\'{e} rational function with the above coefficients \\
    \Repeat{$\mathbf{MSE}$ converges}{
         form a Pad\'{e} rational function $R_{\psi,\phi}$ with $\psi_{0\sim m}, \phi_{1\sim n}$ obtained in the above repeat loop\\
        $R(\Ll)x=\Pp(\Ll)\Qq(\Ll)^{-1}x$ \Comment{Eq. \ref{eq:ratnet} or \ref{eq:ratnet2}} \label{algo:ratnet}\\
        $\theta=\{\psi_{i}, \phi_{j}\}$\\
        compute the mean error function  $\mathcal{L}=\mathbf{MSE}(R(x)-Y) $\\
        compute derivatives to update parameters: $ \theta \leftarrow \theta + \beta\nabla_{\theta}\mathcal{L}$, where $\beta$ is learning rate
    }
\end{algorithm}

\subsection{Theoretical analysis}

This section first represents jump discontinuity using a function(Eq. \ref{eq:jump_discontinuity}). Then convergence rate of rational function on jump discontinuity is analyzed(Theorem \ref{the:converge_rat}). With the help a Lemma \ref{lemma:converge_rat}, we prove Theorem \ref{the:converge_rat}. Similarly, convergence rate of polynomial function(Theorem \ref{the:poly_converge}) is also provided.

We found that $f_{\sigma=1}=a|x|+bx$ and $f_{\sigma=2}=a\frac{|x|}{x}+bx$ can characterize single jump discontinuity. For example, when a=b=1/2 and $\sigma$=0, $f_{1,2}$ is ReLU function. It is $sign(x)$ when a=1 and b=1, and $\sigma$=1. Thus, $f_{1,2}$ rotates or change the angle between two lines at jump discontinuity based on $|x|$ and $x$. These two functions can be rewritten in an uniform formula:
\begin{equation}
\small
\label{eq:jump_discontinuity}
f_{1,2}=a\frac{|x|}{x^{\sigma\in\{0,1\}}}+bx
\end{equation}where $a, b\in \mathbb{R}$. \begin{theorem}[convergence rate of rational approximation on jump discontinuity]
\label{the:converge_rat}
    Given n$\geq$5 and b$\geq$1, there exist a rational function $R_{n}(x)$ of degree n that satisfies\\
    \begin{equation*}
    \small
    \sup_{x\in [-c, c]}|f_{1, 2}-R_{n}(x)|\leq Ce^{-\sqrt{n}}.
    \end{equation*} 
    \label{the:rat_converge}
\end{theorem}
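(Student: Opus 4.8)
The plan is to reduce the statement to the classical theorem of Newman on rational approximation of $|x|$, after peeling off the linear part and rescaling to $[-c,c]$. Recall $f_{1,2}(x)=a\,|x|/x^{\sigma}+bx$ with $\sigma\in\{0,1\}$; the term $bx$ is a polynomial of degree one, so it contributes nothing to the genuinely hard part of the approximation and only one unit to the degree budget. Hence it suffices to construct, with degree at most $n$, a good rational approximant of $a|x|$ (case $\sigma=0$) or of $a\,\mathrm{sign}(x)=a|x|/x$ (case $\sigma=1$) on $[-c,c]$ and then add $bx$ back, tracking how $a,b,c$ enter the final constant $C$.

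For the kink case $\sigma=0$: let $r_{m}$ be Newman's degree-$m$ rational function on $[-1,1]$, for which $\sup_{|t|\le 1}\bigl|\,|t|-r_{m}(t)\,\bigr|\le 3e^{-\sqrt{m}}$. Rescaling, $\tilde r_{m}(x):=c\,r_{m}(x/c)$ is rational of degree $m$ with $\sup_{|x|\le c}\bigl|\,|x|-\tilde r_{m}(x)\,\bigr|\le 3c\,e^{-\sqrt{m}}$. Put $R_{n}(x):=a\,\tilde r_{n-1}(x)+bx$; writing $\tilde r_{n-1}=P/Q$ with $\deg P,\deg Q\le n-1$, we get $R_{n}=(aP+bxQ)/Q$, of type $(n,n-1)$, i.e.\ degree $n$. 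Then $\sup_{|x|\le c}|f_{1,2}-R_{n}|=|a|\sup_{|x|\le c}\bigl|\,|x|-\tilde r_{n-1}\,\bigr|\le 3|a|c\,e^{-\sqrt{n-1}}$, and since for $n\ge 5$
\[ \sqrt{n}-\sqrt{n-1}=\frac{1}{\sqrt{n}+\sqrt{n-1}}\le \tfrac14 , \]
this is at most $3|a|c\,e^{1/4}e^{-\sqrt n}=:Ce^{-\sqrt n}$ with $C=C(a,b,c)$. For the jump case $\sigma=1$, use the same building block: Newman's approximant has the form $r_{m}(t)=t\,s_{m}(t)$ with $s_{m}$ rational, and $|s_{m}(t)-\mathrm{sign}(t)|=\bigl|\,|t|-r_{m}(t)\,\bigr|/|t|\le 3e^{-\sqrt m}/|t|$. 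Taking $R_{n}(x):=a\,s_{n-1}(x/c)+bx$ again yields a degree-$n$ rational function, now with $|f_{1,2}-R_{n}|\le 3|a|c\,e^{-\sqrt{n-1}}/|x|$ for $x\neq 0$: the desired decay holds away from the origin but degrades as $x\to 0$.

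The main obstacle is exactly this last point. A rational function is continuous, so when $\sigma=1$ the function $f_{1,2}$ has a true jump of height $2a$ at $0$ and the supremum over all of $[-c,c]$ simply cannot decay; the theorem must therefore be read with the domain understood as the set of (normalized) graph eigenvalues, which in the regression task of Section~\ref{sec:method} lie in $[0,1]$ and stay bounded away from the discontinuity at $0$. Absorbing the factor $1/\delta$, where $\delta>0$ is the gap separating the sampled eigenvalues from $0$ (e.g.\ $\delta=\min_i\lambda_i$), into $C$ gives $\sup|f_{1,2}-R_{n}|\le Ce^{-\sqrt n}$; for $\sigma=0$ no such restriction is needed since the kink keeps $f_{1,2}$ continuous. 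A secondary, purely bookkeeping, obstacle is to confirm that rescaling plus the additive $bx$ never pushes the type past degree $n$ — handled above by spending one unit of budget on the linear term, which is why the mild hypothesis $n\ge 5$ (ample slack) is harmless. The hypothesis $b\ge 1$ is not actually required for this upper bound; it is used for the companion polynomial lower bound in Theorem~\ref{the:poly_converge}, where it guarantees the slanted/jump part dominates, and one should simply note here that $C$ may depend on $a,b,c$ and $\delta$.
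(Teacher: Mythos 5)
Your construction is essentially the paper's own: the paper likewise builds $R_n$ from Newman's rational approximant (its $A_n(x)=x\,\frac{N_n(x)-N_n(-x)}{N_n(x)+N_n(-x)}$ with nodes $e^{-i/\sqrt n}$), rescales by $c$, adds the linear term back, and invokes the scaled Newman bound of Lemma~\ref{lemma:converge_rat} to conclude with $C=3bc$; your use of $r_{n-1}$ together with $\sqrt{n}-\sqrt{n-1}\le \tfrac14$ is only a cosmetic difference in degree bookkeeping, since the paper's $A_n$ is already of type $(n,n-1)$ and absorbs the linear term without exceeding degree $n$. Where you genuinely depart from the paper is the $\sigma=1$ case, and there your version is the more defensible one: the paper claims the uniform bound $3ce^{-\sqrt n}$ for $|x|/x$ on all of $[-c,c]$ ``by the same procedure,'' but that procedure only works for $|x|> ce^{-\sqrt n}$ (where the factor $x$ pulled out of $A_n$ cancels the division by $x$); on $(0,ce^{-\sqrt n}]$ the $\sigma=0$ argument gives $\bigl||x|-cA_n(x/c)\bigr|\le |x|$, which after dividing by $x$ leaves an $O(1)$ error, and indeed no continuous rational function can be uniformly $e^{-\sqrt n}$-close to a genuine jump across $x=0$. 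Your explicit restriction to points bounded away from the discontinuity, absorbing $1/\delta$ into $C$ (or, sharper, restricting to $|x|\ge ce^{-\sqrt n}$, where the error is $\le 3ce^{-\sqrt n}$ with no $1/\delta$ loss), is precisely the qualification the paper's statement needs. Your remarks that $b\ge 1$ is not used for this upper bound and that $C$ depends on $a,b,c$ are also accurate; the paper's $C=3bc$ reflects its silent swap of the roles of $a$ and $b$ between the theorem statement and its proof.
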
In our proof of Theorem \ref{the:rat_converge}, for $n\in \mathbb{N}$, we follow \cite{newman1964rational} and define the Newman polynomial: $\nr(x):={\prod}_{i=1}^{n-1}(x+\alpha^{i}_{n})$, where $\alpha_{n}:= e^{-1/\sqrt{n}}$. To approximate jump discontinuity, define $A_{n}(x)$ as Newman approximation:$A_{n}(x):=x\frac{\nr(x)-\nr(-x)}{\nr(x)+\nr(-x)}$.

\begin{lemma}
\label{lemma:converge_rat}
    Given $n \in [5, \infty)\cap \mathbb{Z}$, $c\in [1,+\infty)$,  $\sigma\in\{0,1\}$
    \begin{equation}
    \small
    \sup_{x\in [-c, c]}\left|\frac{|x|}{x^{\sigma}}-\frac{cA_{n}(x/c)}{x^{\sigma}}\right|\leq 3ce^{-\sqrt{n}}.
    \label{rat-lemma}
    \end{equation}
\end{lemma}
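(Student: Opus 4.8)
\emph{Proof proposal.} The plan is to reduce the statement, by an affine rescaling, to Newman's classical theorem on rational approximation of $|x|$, and then to dispatch $\sigma=0$ and $\sigma=1$ separately. First set $t=x/c$, so $t$ ranges over $[-1,1]$ as $x$ ranges over $[-c,c]$; since $|x|=c|t|$, $cA_n(x/c)=cA_n(t)$ and $x^{\sigma}=c^{\sigma}t^{\sigma}$, one gets
\begin{equation*}
\sup_{x\in[-c,c]}\left|\frac{|x|}{x^{\sigma}}-\frac{cA_n(x/c)}{x^{\sigma}}\right| \;=\; c^{\,1-\sigma}\sup_{t\in[-1,1]}\left|\frac{|t|}{t^{\sigma}}-\frac{A_n(t)}{t^{\sigma}}\right|.
\end{equation*}
Because $c\ge1$ and $\sigma\in\{0,1\}$ give $c^{\,1-\sigma}\le c$, it suffices to show the supremum over $t\in[-1,1]$ is at most $3e^{-\sqrt n}$, i.e. to treat the case $c=1$.

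For $\sigma=0$ the quantity to bound is $\sup_{t\in[-1,1]}\bigl|\,|t|-A_n(t)\,\bigr|$, and the bound $\le 3e^{-\sqrt n}$ for $n\ge5$ is exactly Newman's estimate \cite{newman1964rational} for the Newman polynomial $\nr$ and the approximant $A_n$ introduced just above the lemma. The argument to reproduce: by evenness of $|t|$ and $A_n$ restrict to $t\in[0,1]$ and write $t-A_n(t)=\frac{2t\,\nr(-t)}{\nr(t)+\nr(-t)}$; note $|\nr(-t)|<\nr(t)$ because $|\alpha_n^i-t|<\alpha_n^i+t$, so the denominator is positive; then split on $t\le e^{-\sqrt n}$ versus $t>e^{-\sqrt n}$. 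In the first range each factor $\alpha_n^i-t$ is positive and below $\alpha_n^i+t$, so $|t-A_n(t)|\le 2t\le 2e^{-\sqrt n}$; in the second, bound $\bigl|\nr(-t)/\nr(t)\bigr|=\prod_i\bigl|(\alpha_n^i-t)/(\alpha_n^i+t)\bigr|$ by a product of factors $1-\alpha_n^{j}$, hence by $\exp\!\big(-\sum_j\alpha_n^{j}\big)$, and use that $\sum_j\alpha_n^{j}$ is of order $\sqrt n$. Collecting the two cases gives the constant $3$ once $n\ge5$.

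For $\sigma=1$, $|t|/t=\operatorname{sign}(t)$ and $A_n(t)/t=\frac{\nr(t)-\nr(-t)}{\nr(t)+\nr(-t)}$ is a bona fide rational function (no pole at $0$, since $\nr(0)+\nr(0)=2\nr(0)\neq0$). For $t\neq0$,
\begin{equation*}
\left|\frac{|t|}{t}-\frac{A_n(t)}{t}\right| \;=\; \frac{1}{|t|}\bigl|\,|t|-A_n(t)\,\bigr| \;\le\; \frac{3e^{-\sqrt n}}{|t|},
\end{equation*}
by the $\sigma=0$ case; translating back, $\bigl|\,|x|/x-cA_n(x/c)/x\,\bigr|\le 3c\,e^{-\sqrt n}/|x|\le 3c\,e^{-\sqrt n}$ as soon as $|x|\ge1$, so on $1\le|x|\le c$ the estimate is immediate. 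To reach smaller $|t|$ one reruns the two-case analysis keeping the factor $1/t$: on $t\le e^{-\sqrt n}$ one bounds $|\operatorname{sign}(t)-A_n(t)/t|=\frac{2|\nr(-t)|}{\nr(t)+\nr(-t)}$ directly, and on $t>e^{-\sqrt n}$ the factor $1/t\le e^{\sqrt n}$ has to be absorbed by the $\prod_j(1-\alpha_n^{j})$ estimate.

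The hard part is precisely the $\sigma=1$ case near the origin: $\operatorname{sign}$ jumps at $0$ while $A_n(t)/t$ is continuous and vanishes there, so $|\operatorname{sign}(t)-A_n(t)/t|\to1$ as $t\to0$. Consequently the $\sigma=1$ branch must be understood with $x$ kept away from $0$ (e.g. on $1\le|x|\le c$), which is exactly what lets the $1/|t|$ blow-up be absorbed into the factor $c$ so that the constant $3$ survives; alternatively one would need a sharper uniform estimate of $\frac{\nr(-t)}{\nr(t)+\nr(-t)}$ on $[-1,1]$. The $\sigma=0$ step is otherwise routine once Newman's polynomial estimates are in hand, the only care being to track the constant so that $n\ge5$ is admissible.
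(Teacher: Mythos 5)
Your $\sigma=0$ argument is essentially the paper's own proof, up to the cosmetic rescaling $t=x/c$ at the outset: evenness reduces everything to $[0,c]$, the case split at $ce^{-\sqrt{n}}$ is the paper's split, the small-interval case is handled by $0\le A_n(t)\le t$ (the paper bounds the error by $x\le ce^{-\sqrt{n}}$, you by $2t$), and the large-interval case rests on the lower bound $\nr(t)/\nr(-t)\ge e^{\sqrt{n}}$, which the paper quotes from Lorentz (Ch.~7, Lemma~3.2) and you re-derive via $\prod_j\big|(\alpha_n^j-t)/(\alpha_n^j+t)\big|\le \exp(-\sum_j\alpha_n^j)$ with $\sum_j\alpha_n^j$ of order $\sqrt{n}$; the condition $n\ge 5$ then turns $2c/(e^{\sqrt{n}}-1)$ into $3ce^{-\sqrt{n}}$ exactly as in the paper.

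For $\sigma=1$ you have not closed the argument, but the obstruction you flag is real, and it sits in the paper rather than in your write-up. The paper's entire treatment of $\sigma=1$ is the sentence ``following the same procedure as $\sigma=0$,'' and that procedure does not transfer on $0<|x|\le ce^{-\sqrt{n}}$: there the $\sigma=0$ case was saved by the factor $|x|$ itself (the error $\big||x|-cA_n(x/c)\big|$ is at most $|x|$), and dividing by $x$ removes precisely that factor. Concretely, $cA_n(x/c)/x=\big(\nr(x/c)-\nr(-x/c)\big)/\big(\nr(x/c)+\nr(-x/c)\big)\to 0$ as $x\to 0$, so $\big||x|/x-cA_n(x/c)/x\big|\to 1$, whereas $3ce^{-\sqrt{n}}<1$ for, e.g., $c=1$ and every $n\ge 5$; for such $c,n$ the stated supremum bound therefore fails near the origin, and no approximant continuous at $0$ could satisfy it. Your remedy is the right one: the $\sigma=1$ estimate does hold once $x$ is kept away from $0$, and in fact $|x|\ge ce^{-\sqrt{n}}$ already suffices, since there the same Newman/Lorentz bound gives $2\nr(-x/c)/\big(\nr(x/c)+\nr(-x/c)\big)\le 2e^{-\sqrt{n}}\le 3ce^{-\sqrt{n}}$; your restriction $1\le|x|\le c$ is a special case. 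So your $\sigma=0$ half matches the paper, while your $\sigma=1$ half is not a proof of the lemma as stated --- but the lemma as stated is not provable on all of $[-c,c]$, the paper's one-line argument silently skips the failing region, and the defect propagates to the $f_2$ part of Theorem~\ref{the:rat_converge}, which should likewise be restricted away from the jump point or restated.
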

\begin{proof}[proof for Lemma \ref{rat-lemma}]
    \textbf{If $\sigma$=0}, left of Eq. \ref{rat-lemma} is equivalent to 
    \begin{equation*}
    \small
    \begin{aligned}
    \left||x|-cA_{n}(\frac{x}{c})\right|=\left|c(|\frac{x}{c}|-A_{n}(\frac{x}{c}))\right|
    =c\left||\frac{x}{c}|-A_{n}(\frac{x}{c})\right|.
    \end{aligned}
    \end{equation*}
    
    Since $|\frac{x}{c}|$ and $A_{n}(\frac{x}{c})$ are both even, it suffices to consider the case when $0\leq x\leq c$. 
    
    \textbf{For $x \in [0, c\alpha_{n}^{n}=ce^{-\sqrt{n}}]$}, since $\nr(x)\geq\nr(-x)\geq0$ so that $\frac{a}{c}A_{n}(\frac{a}{c})\geq 0$:
    \begin{equation*}
    \small
    c\left||\frac{x}{c}|-A_{n}(\frac{a}{c})\right|\leq c\left||\frac{x}{c}|\right| = x \leq ce^{-\sqrt{n}} < 3ce^{-\sqrt{n}}.
    \end{equation*}
    
    \textbf{For $x \in (c\alpha_{n}^{n}=ce^{-\sqrt{n}}, c]$}:
    
    \begin{equation*}
    \small
    \begin{aligned}
    &c\left||\frac{x}{c}|-A_{n}(\frac{x}{c})\right|&&\\
    =&c\left|\frac{x}{c}-A_{n}(\frac{x}{c})\right|&&(\frac{x}{c}>0)\\
    =&c\left|\frac{x}{c}-\frac{x}{c}\frac{\nr(\frac{x}{c})-\nr(-\frac{x}{c})}{\nr(\frac{x}{c})+\nr(-\frac{x}{c})}\right|&&(\text{definition of }A_{n})\\
    =&2x\left|\frac{\nr(\frac{x}{c})}{\nr(-\frac{x}{c})}+1\right|^{-1}&&\\
    \leq&2c\frac{x}{c}\left[\left|\frac{\nr(\frac{x}{c})}{\nr(-\frac{x}{c})}\right|-|-1|\right]^{-1}&&(|a-b|\geq|a|-|b|)\\
    \leq&2c\left[\left|\frac{\nr(\frac{x}{c})}{\nr(-\frac{x}{c})}\right|-1\right]^{-1}&&(\frac{x}{c}\leq 1)\\
    \leq&\frac{2c}{e^{\sqrt{n}}-1}&& (\text{Lemma 3.2, Ch. 7, \cite{lorentz1996constructive}})\\
    \leq&\frac{2c}{e^{\sqrt{n}}-\frac{1}{3}e^{\sqrt{n}}}&&(\frac{e^{\sqrt{n}}}{3} \geq \frac{e^{\sqrt{5}}}{3} \approx \frac{3.19}{3}>1) \\
    =&3ce^{-\sqrt{n}}.&&
    \end{aligned}
    \end{equation*}
    
    \textbf{If $\sigma$=1}, 
    Following same procedure as $\sigma$=0, we have:
    \begin{equation*}
    \small
    \left|\frac{|x|}{x}-\frac{cA_{n}(x/c)}{x}\right|\leq 3ce^{-\sqrt{n}}.
    \end{equation*}
\end{proof}

\begin{proof}[proof for Theorem \ref{the:rat_converge}]
    Applying Lemma \ref{rat-lemma}:
    \begin{equation*}
    \small
    \begin{aligned}
    \left|f_{1}-R(x)\right|=&\left|(ax+b|x|)-(ax+bcA_{n}(\frac{x}{c}))\right|&&\\
    =&b\left||x|-cA_{n}(\frac{x}{c})\right|
    \leq 3bce^{-\sqrt{n}}.&&
    \end{aligned}
    \end{equation*}
    
    Similarly, we have:
    \begin{equation*}
    \small
    \begin{aligned}
    \left|f_{2}-R(x)\right|=&\left|(ax+b\frac{|x|}{x})-(ax+b\frac{cA_{n}(x/c)}{x}))\right|&&\\
    =&b\left|\frac{|x|}{x}-\frac{cA_{n}(x/c)}{x}\right|
    \leq 3bce^{-\sqrt{n}}.&&\\
    \end{aligned}
    \end{equation*}
    In sum, 
    \begin{equation*}
    \sup_{x\in [-c, c]}|f_{1, 2}-R_{n}(x)|\leq Ce^{-\sqrt{n}},
    \end{equation*}where C=3bc in Theorem \ref{the:rat_converge}
\end{proof}

By Bernstein's theorem\cite{achiezertheory}, polynomials can approximate a function with:
\begin{equation*}
\small
||x|-P_{n}(x)|\leq \frac{\beta}{n},
\end{equation*}where $P_{n}(x)$ is a polynomial function of degree of n, and $\beta\approx 2.801$\cite{Varga1985}.
Using the same settings for the rational function, we have a similar result for polynomials:

\begin{theorem}[convergence rate of polynomial approximation on jump discontinuity]
     Given $n \in [5, \infty)\cap \mathbb{Z}$, $c\in [1,+\infty)$,  $\sigma\in\{0,1\}$:
    \begin{equation*}
    \sup_{x\in [-c, c]}|f_{1, 2}-P_{n}(x)|\leq \frac{C\beta}{n},
    \end{equation*}where $P_{n}(x)$ is a polynomial function of degree n, and C=3bc.
    \label{the:poly_converge}
\end{theorem}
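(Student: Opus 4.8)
The plan is to run the argument behind Theorem~\ref{the:rat_converge} essentially verbatim, only swapping the Newman rational approximant $A_n$ for a near-best \emph{polynomial} approximant to $|x|$; the single ingredient that changes is the rate, which degrades from $e^{-\sqrt n}$ to $\beta/n$ because Bernstein's theorem now plays the role Newman's estimate played there. Since $f_{1,2}=a\frac{|x|}{x^{\sigma}}+bx$ from Eq.~\ref{eq:jump_discontinuity} and the linear term $ax$ can be folded into the approximant, everything reduces to approximating $|x|$ (case $\sigma=0$) or $\frac{|x|}{x}$ (case $\sigma=1$) on $[-c,c]$.

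The first step is a polynomial analogue of Lemma~\ref{lemma:converge_rat}. Bernstein's theorem, quoted just before the statement, gives for every $n$ a polynomial $p_n$ of degree $n$ with $\sup_{t\in[-1,1]}\bigl||t|-p_n(t)\bigr|\le \beta/n$ (and, $|t|$ being even, $p_n$ may be taken even). Put $\bar p_n(x):=c\,p_n(x/c)$, again of degree $n$; since $x\mapsto x/c$ maps $[-c,c]$ bijectively onto $[-1,1]$,
\[
\sup_{x\in[-c,c]}\bigl||x|-\bar p_n(x)\bigr|=c\sup_{t\in[-1,1]}\bigl||t|-p_n(t)\bigr|\le \frac{c\beta}{n},
\]
which is the $\sigma=0$ statement. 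For $\sigma=1$ I would divide through by $x$ and argue as in the $\sigma=1$ branch of the proof of Lemma~\ref{lemma:converge_rat}: split $[-c,c]$ into $\{|x|\le\rho_n\}$, with $\rho_n$ of the order of the approximation error $c\beta/n$, and $\{|x|>\rho_n\}$; on the second piece $\bigl|\tfrac{|x|}{x}-\tfrac{\bar p_n(x)}{x}\bigr|=\tfrac1{|x|}\bigl||x|-\bar p_n(x)\bigr|$ is controlled by the displayed bound, and on the first piece the difference is estimated crudely, exactly as $A_n$ is handled on $[0,ce^{-\sqrt n}]$ there.

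With this lemma, Theorem~\ref{the:poly_converge} follows just as Theorem~\ref{the:rat_converge} followed from Lemma~\ref{lemma:converge_rat}. For $\sigma=0$ take $P_n(x):=ax+b\,\bar p_n(x)$, a polynomial of degree at most $n$; then
\[
\bigl|f_1-P_n(x)\bigr|=\bigl|(ax+b|x|)-(ax+b\bar p_n(x))\bigr|=b\bigl||x|-\bar p_n(x)\bigr|\le \frac{bc\beta}{n},
\]
and for $\sigma=1$, with the corresponding approximant $ax+b\,\bar p_n(x)/x$,
\[
\bigl|f_2-P_n(x)\bigr|=b\,\Bigl|\tfrac{|x|}{x}-\tfrac{\bar p_n(x)}{x}\Bigr|\le \frac{bc\beta}{n}.
\]
Taking the supremum over $[-c,c]$ yields the claim with $C=bc$, hence a fortiori with the stated $C=3bc$, the larger constant being kept only so that it matches the $3bc\,e^{-\sqrt n}$ of Theorem~\ref{the:rat_converge}.

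The part I expect to be the main obstacle is exactly the $\sigma=1$ case near the origin, inherited from Lemma~\ref{lemma:converge_rat}: dividing the $|x|$-estimate by $x$ threatens to blow the error up as $x\to0$, so one must verify that the zone where $|x|$ is comparable to the approximation error $c\beta/n$ contributes no more than the claimed bound --- the polynomial mirror of the split at $ce^{-\sqrt n}$ in the Newman argument. Everything else is a mechanical rescaling of Bernstein's theorem.
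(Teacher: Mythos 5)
Your $\sigma=0$ case is correct and is essentially the paper's own (very terse) argument: the paper proves nothing beyond quoting Bernstein's bound and asserting the theorem ``using the same settings,'' so your rescaling $\bar p_n(x)=c\,p_n(x/c)$ and the computation $|f_1-P_n(x)|=b\,\bigl||x|-\bar p_n(x)\bigr|\le bc\beta/n$ is a complete version of what the paper intends for that half (with the same $a$/$b$ role swap relative to Eq.~\ref{eq:jump_discontinuity} that the paper's own proof of Theorem~\ref{the:rat_converge} makes).

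The $\sigma=1$ case, however, is a genuine gap, and it is not fixable by any choice of near-origin estimate. First, the approximant you propose, $ax+b\,\bar p_n(x)/x$, is not a polynomial unless $\bar p_n(0)=0$, which Bernstein's near-best approximant does not satisfy (its error equioscillates, so $\bar p_n(0)\approx c\beta/n\neq 0$, and then $\bar p_n(x)/x$ is even unbounded near $0$); the theorem requires $P_n$ to be a polynomial. Second, the ``crude estimate'' near the origin cannot deliver the rate: in the rational case the crude bound was $\bigl||x|-cA_n(x/c)\bigr|\le x$ on $[0,ce^{-\sqrt n}]$, and after dividing by $x$ this gives only an $O(1)$ bound, not $3ce^{-\sqrt n}$ --- this is already a defect of the paper's Lemma~\ref{lemma:converge_rat} in its $\sigma=1$ branch (``following the same procedure''), and your plan inherits it. Third, and decisively, the claimed inequality is false for large $n$ no matter how $P_n$ is chosen: $f_2$ jumps by $2b$ at $x=0$, so if $\sup_{x\in[-c,c]\setminus\{0\}}|f_2-P_n(x)|\le\epsilon$ with $P_n$ continuous, letting $x\to 0^{\pm}$ forces $b-\epsilon\le P_n(0)\le -b+\epsilon$, hence $\epsilon\ge b$; the bound $3bc\beta/n$ drops below $b$ as soon as $n>3\beta c\approx 8.4\,c$. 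So the $\sigma=1$ half of Theorem~\ref{the:poly_converge} cannot be proved as stated; it can only hold with a neighborhood of the discontinuity excised from the supremum (where your division-by-$x$ step does work), or in a weaker norm.
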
In a nutshell, when the order is large or equal to 5, polynomial converges linearly regarding the order number, while rational function converges exponentially.

\section{Evaluation}\label{sec:evaluation}
This section elaborates evaluation with a detailed analysis of the behaviors of the proposed method on synthetic and real-world graphs.

\subsection{Training Setting and Baselines}

%
The input include a graph Laplacian $\Ll$, a graph signal residing on each vertex $Y$, and raw feature $x$ . In a nutshell, we aims at finding a function $f$ that satisfies $Y = f(\Ll, x)$:
\begin{equation}
\small
Y = \Uu \gt(\LBD) \UT x= \gt(\Ll) x,
\label{eq:regress}
\end{equation}where $\gt(\LBD)$ is set to be jump function such $|x|$ and $sign(x)$. In previous works, raw features and filtering signal are fed into the model to fit the graph signal. However, raw features have an impact on fitting graph signal, which distracts the analysis of filtering behaviors. As discussed in Section \ref{sec:method}, $x$ is set to be eigenvector $\Uu$ which is a uniform signal in spectral domain, so that we can focus on the behaviors of approximation methods.
We compare RationalNet(RNet) against several stat of art regression models:
\begin{itemize}
    \item Linear Regression(LR)
    \item Polynomial Regression(PR) \cite{stigler1974gergonne}
    \item Passive Aggressive Regression(PAR) \cite{crammer2006online}
    \item LASSO\cite{tibshirani1996regression}
    \item Epsilon-Support Vector Regression(SVR)\cite{smola2004tutorial}. Three kernels were applied: linear(L), polynomial(P) and RBF(R).
    \item Ridge Regression(RR)\cite{ng2004feature}
    \item Bayesian Ridge Regression(BR)\cite{mackay1992bayesian},  
    \item Automatic Relevance Determination(ARD)\cite{tipping2001sparse}
    \item Elastic Net(EN)\cite{zou2005regularization}
    \item Orthogonal Matching Pursuit(OMP)\cite{mallat1993matching}
    \item SGD Regression
    \item Huber Regression \cite{huber2011robust}
    \item ChebNet\cite{defferrard2016convolutional}. PolyNet is proposed by replacing Chebyshev polynomial with normal polynomial.
\end{itemize} 

\subsection{experiments on synthetic data}
To validate the effectiveness of RationalNet, we conduct a simulated test with synthetic data. The task is to recover signal on the vertexes, which is a regression problem. Specifically, we generated a graph comprised of several subgroups. The edge amount for each vertex in the same subgroup is randomly chosen between 0 and 8, while the links among different subgroups are sampled between 0 and 3. Experiments were conducted on a 500-node and a 1000-node graph. 
Two types of jump signals are fed into this network structure: $|x|$ and $sign(x)$. Since all eigenvalues are normalized into range [0, 1], jump points of  $|x|$ and $sign(x)$ are moved into the same range. Specifically, we used $|x-0.5|$ and $sign(x-0.5)$. Detailed results are shown in Table \ref{syn_500} and \ref{syn_1000}.

\begin{table}[hbt]
	\tiny
	\centering
	\begin{tabular}{l|cc|cc}
		\toprule \hline
		Method & S-ERR($|x|$) & V-ERR($|x|$) &  S-ERR($sign(x)$) & V-ERR($sign(x)$)  \\ \hline
		SVR-R & .0044$\pm$.0000 & .0043$\pm$.0000 &.3840$\pm$.0000&.2573$\pm$.0000 \\
		SVR-L & .0165$\pm$.0000 & .0111$\pm$.0000 &.3218$\pm$.0000&.2799$\pm$.0000 \\
		SVR-P & .0179$\pm$.0000 & .0131$\pm$.0000 &.3587$\pm$.0000&.2573$\pm$.0000 \\
		LR & .0161$\pm$.0000 & .0110$\pm$.0000 &.3211$\pm$.0000&.2788$\pm$.0000 \\
		RR & .0160$\pm$.0000 & .0110$\pm$.0000 &.3199$\pm$.0000&.2786$\pm$.0000 \\
		LASSO & .0157$\pm$.0000 & .0137$\pm$.0000 &.5581$\pm$.0000&.5087$\pm$.0000 \\
		EN & .0157$\pm$.0000 & .0137$\pm$.0000 &.5969$\pm$.0000&.5438$\pm$.0000 \\
		OMP & .0161$\pm$.0000 & .0110$\pm$.0000 &.3211$\pm$.0000&.2788$\pm$.0000 \\
		BR & .0161$\pm$.0000 & .0110$\pm$.0000 &.3210$\pm$.0000&.2788$\pm$.0000 \\
		ARD & .0161$\pm$.0000 & .0110$\pm$.0000 &.3210$\pm$.0000&.2788$\pm$.0000 \\
		SGD & .0152$\pm$.0000 & .0116$\pm$.0000 &.3191$\pm$.0001&.2795$\pm$.0003 \\
		PAR & .2871$\pm$.0997 & .2740$\pm$.1033 &1.0370$\pm$.8892&.9745$\pm$.8418 \\
		Huber & .0202$\pm$.0000 & .0123$\pm$.0000 &.3219$\pm$.0000&.2794$\pm$.0000 \\
		PolyFit & .0016$\pm$.0000 & .0010$\pm$.0000 &.2057$\pm$.0000&.1703$\pm$.0000 \\
		\hline
		ChebNet&.0021$\pm$.0000  & 1.1904$\pm$.0052  &.2058$\pm$ .0067 &.2084$\pm$ .0043\\
		PolyNet& .0016$\pm$.0000 &.0038$\pm$.0000  & .2011$\pm$.0095 & .2001$\pm$.0056 \\
		RNet & \textbf{5.2971e-6$\pm$1.2501e-8} &\textbf{.0001$\pm$.00000} &\textbf{.0103$\pm$.0001} &\textbf{.0153$\pm$.0006}\\ \hline
		\bottomrule 
	\end{tabular}
	\caption{1000-node graph test: s-err indicates error in spectral domain, while v-err represents error in vertex domain.}
	\label{syn_1000}
\end{table}

\begin{table}[hbt]
	\tiny
	\centering
	\begin{tabular}{l|cc|cc}
		\toprule \hline
		Method & S-ERR(|x|) & V-ERR(|x|) &  S-ERR($sign(x)$) & V-ERR($sign(x)$)  \\ \hline
		SVR-R & .0043$\pm$.0000 & .0044$\pm$.0000 &.2691$\pm$.0000&.2867$\pm$.0000 \\
		SVR-L & .0148$\pm$.0000 & .0131$\pm$.0000 &.2612$\pm$.0000&.2748$\pm$.0000 \\
		SVR-P & .0137$\pm$.0000 & .0138$\pm$.0000 &.2784$\pm$.0000&.2875$\pm$.0000 \\
		LR & .0140$\pm$.0000 & .0130$\pm$.0000 &.2582$\pm$.0000&.2734$\pm$.0000 \\
		RR & .0140$\pm$.0000 & .0130$\pm$.0000 &.2579$\pm$.0000&.2741$\pm$.0000 \\
		LASSO & .0135$\pm$.0000 & .0137$\pm$.0000 &.4723$\pm$.0000&.4865$\pm$.0000 \\
		EN & .0135$\pm$.0000 & .0137$\pm$.0000 &.5260$\pm$.0000&.5374$\pm$.0000 \\
		OMP & .0140$\pm$.0000 & .0130$\pm$.0000 &.2582$\pm$.0000&.2734$\pm$.0000 \\
		BR & .0140$\pm$.0000 & .0130$\pm$.0000 &.2581$\pm$.0000&.2734$\pm$.0000 \\
		ARD & .0140$\pm$.0000 & .0130$\pm$.0000 &.2581$\pm$.0000&.2734$\pm$.0000 \\
		SGD & .0135$\pm$.0000 & .0138$\pm$.0000 &.2597$\pm$.0007&.2764$\pm$.0008 \\
		PAR & .4026$\pm$.3980 & .3982$\pm$.3954 &.7412$\pm$.5682&.7456$\pm$.5029 \\
		Huber & .0158$\pm$.0000 & .0135$\pm$.0000 &.2581$\pm$.0000&.2734$\pm$.0000 \\
		PolyFit & .0010$\pm$.0000 & .0011$\pm$.0000 &.1488$\pm$.0000&.1699$\pm$.0000 \\
		\hline
		ChebNet&.0044$\pm$.0000&.0044$\pm$.0000&.2025$\pm$.0000    & .2115$\pm$.0004\\
		PolyNet&.0016$\pm$.0000&.0016$\pm$.0000& .2059$\pm$.0000   & .2083$\pm$.0004  \\
		RNet &\textbf{.0001$\pm$.0000} & \textbf{.0001$\pm$.0000} & \textbf{.0108$\pm$.0001} & \textbf{.1479$\pm$.0001} \\  \hline
		\bottomrule
	\end{tabular}
	\caption{500-node graph test: s-err indicates error in spectral domain, while v-err represents error in vertex domain.}
	\label{syn_500}
\end{table}

In 1000-node graph test on  $|x|$(first two columns in Table \ref{syn_1000}), PolyFit achieved the second lowest MSE(0.0016 for S-ERR). PolyNet's MSE(0.0016) is the same as that of PolyFit, which shows the power of polynomial regression. Chebyshev polynomial(ChebNet) dose not improve PolyNet, which implies that neural network might approximate the best coefficients of polynomials no matter what type of polynomial is used. LR, RR, LASSO, EN, OMP, BR, ARD, SGD SVR(L/P) performed at the same level(0.0015-0.0018). Our method(5e-6) significantly outperformed all the baselines by a large margin. Both the errors in spectral domain and vertex domain show the advantage of RationalNet. The 
Similarly, PolyFit and PolyNet and SVR(R) performed better than all the baselines except RationalNet. Our method still achieves the lowest MSE(0.004619 for S-ERR). The 500-node graph experiment(Table \ref{syn_1000})) also demonstrates the effectiveness of RationalNet.

\begin{figure}[!hbtp]
	\includegraphics[width=3.4in]{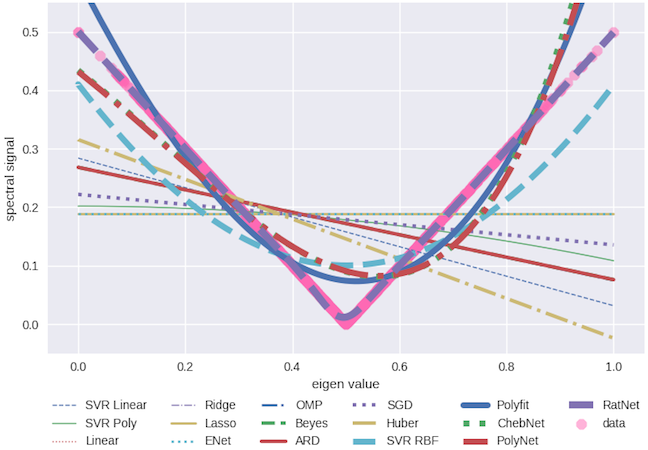}
	\caption{Regression performance comparison on $|x|$.}
	\label{abs_plot}
\end{figure}
Regression behaviors on synthetic data is shown in Fig. \ref{abs_plot} and \ref{sign_plot}. Methods (SVR(L), Ridge, OMP, LASSO, Linear regression, ENet, ARD, Huber, etc.) fitted the  $|x|$(Fig. \ref{abs_plot}) using a straight Line, while better baselines(SVR(R)), PolyFit, PolyNet, ChebNet) approximate the function with curves. RationalNet almost overlapped with the target function which makes its MSE very small(5e-6). Similarly, in Fig.  \ref{sign_plot}, the methods using straight lines(SVR(L), Ridge, OMP, SGD, LASSO, BR, Huber LR, ENet, ARD) performed relatively bad. Fitting with curves, PolyFit, PolyNet, ChebNet improved the performance by a large margin. Similarly, RationalNet overlapped the signal and achieved the lowest error score.
\begin{figure}[!hbtp]
	\includegraphics[width=3.4in]{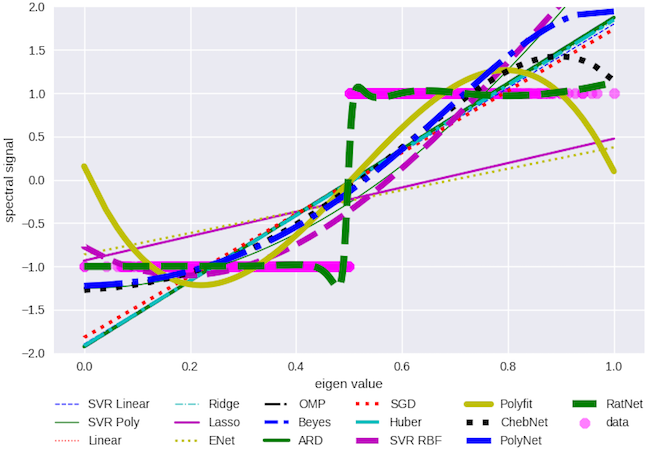}
	\caption{Regression performance comparison on $sign(x)$.}
	\label{sign_plot}
\end{figure}
\begin{figure*}[h]
	\centering
	\begin{subfigure}
		\centering
		\includegraphics[width=1.35in]{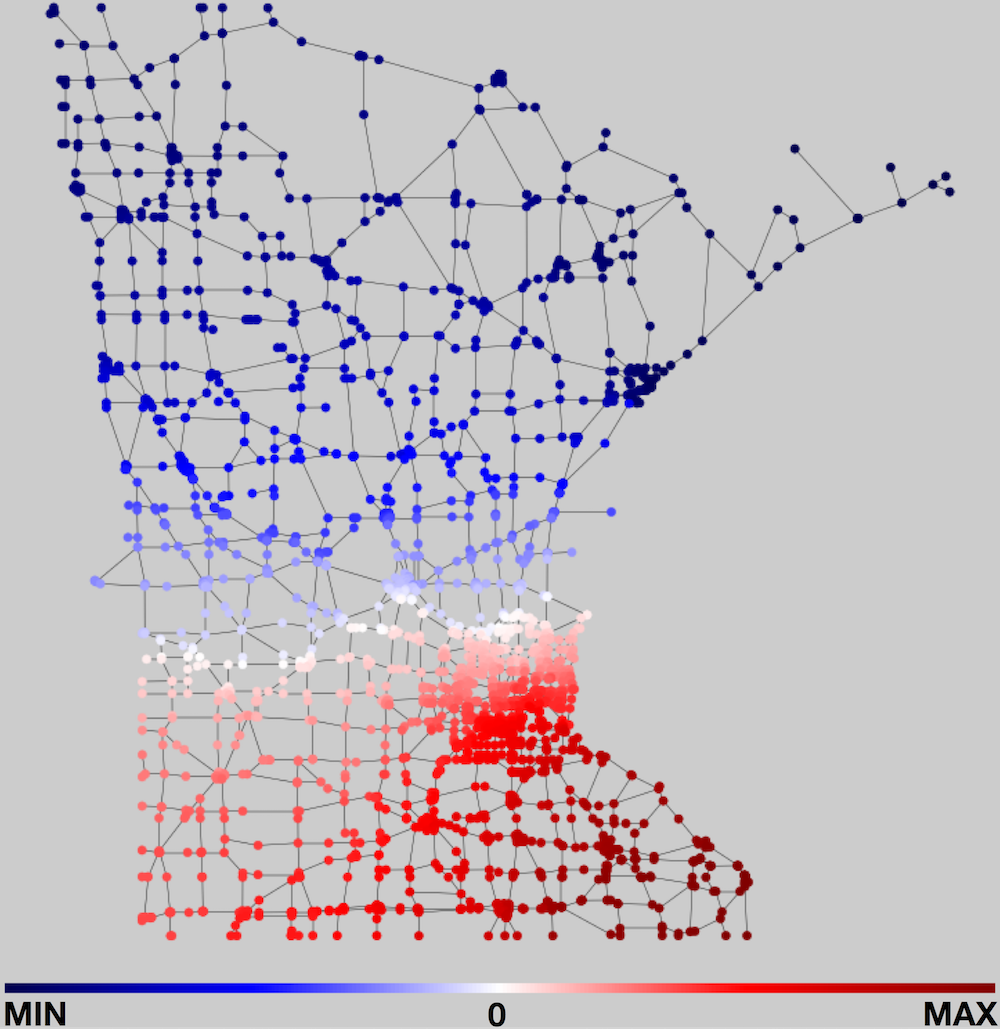}
	\end{subfigure}
	\begin{subfigure}
		\centering
		\includegraphics[width=1.35in]{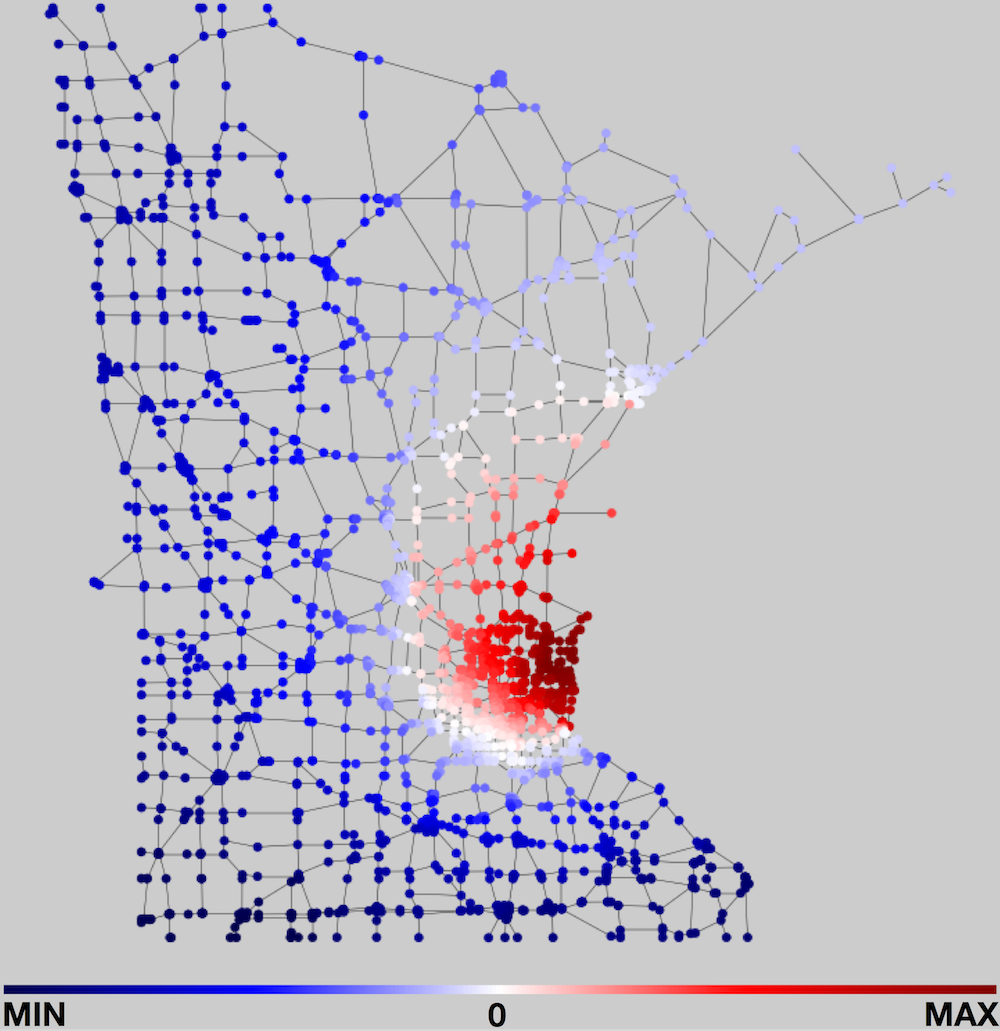}
	\end{subfigure}
	\begin{subfigure}
		\centering
		\includegraphics[width=1.35in]{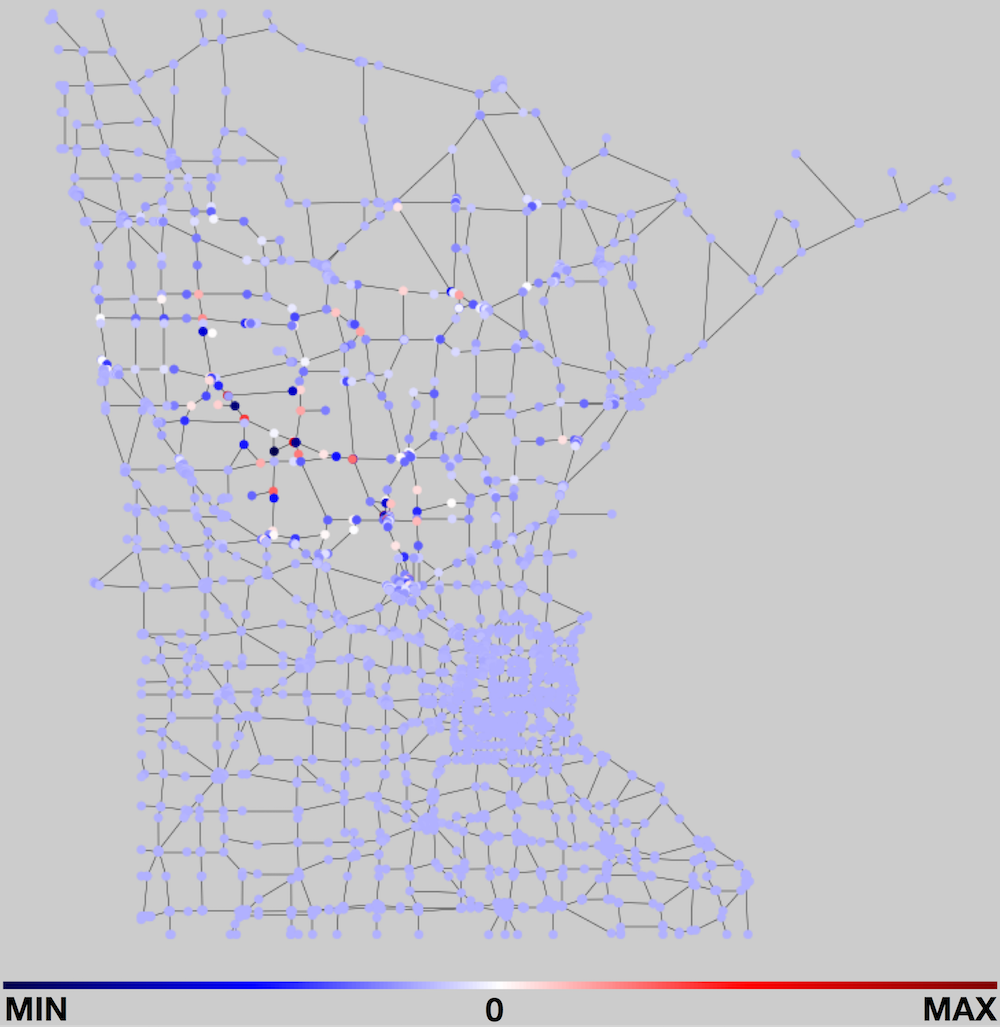}
	\end{subfigure}
	\begin{subfigure}
		\centering
		\includegraphics[width=1.35in]{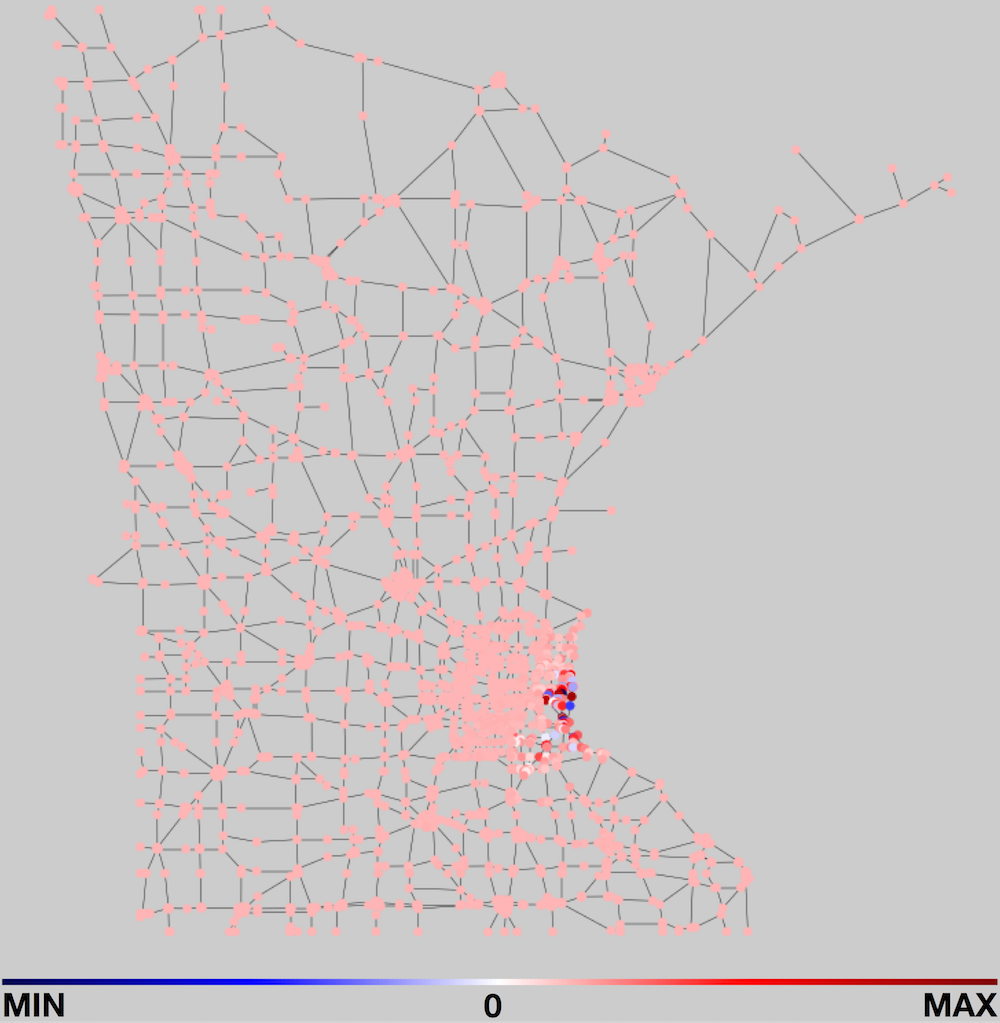}
	\end{subfigure}
	\begin{subfigure}
		\centering
		\includegraphics[width=1.35in]{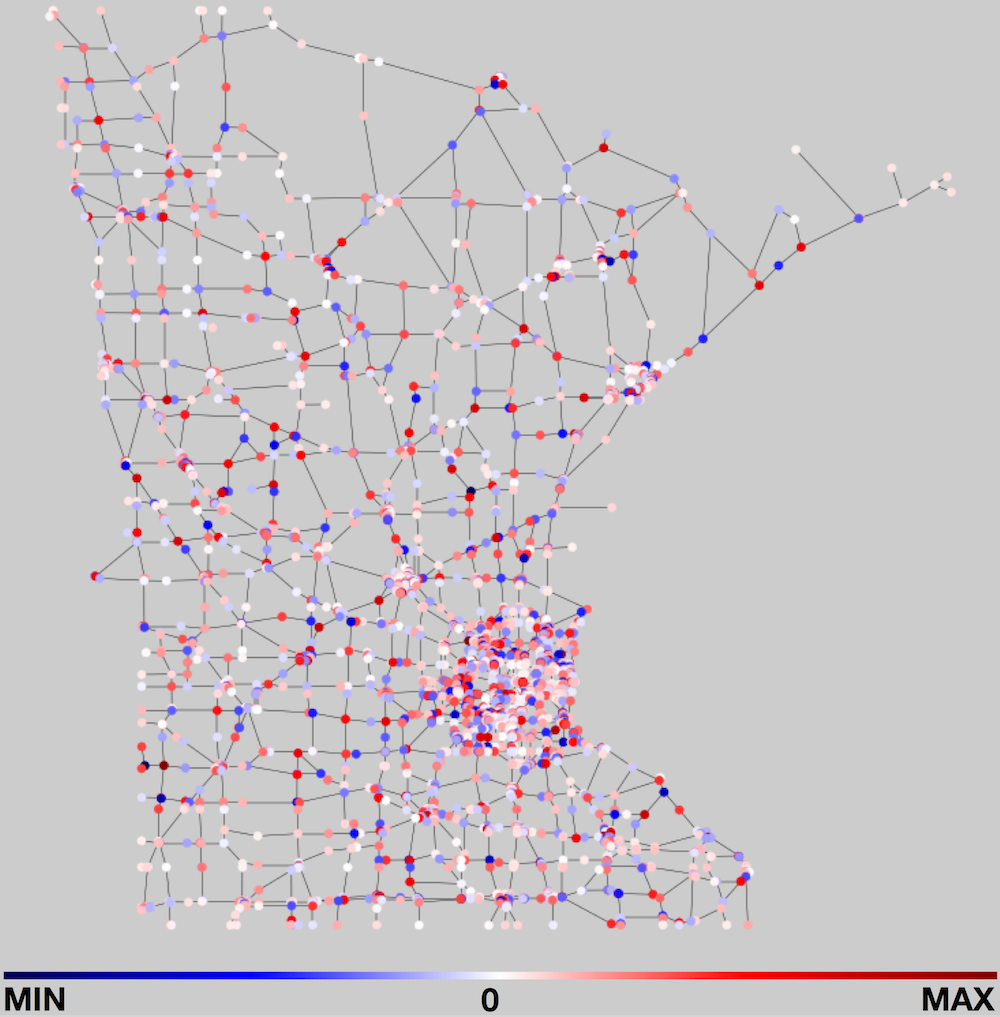}
	\end{subfigure}
	\begin{subfigure}
		\centering
		\includegraphics[width=1.355in]{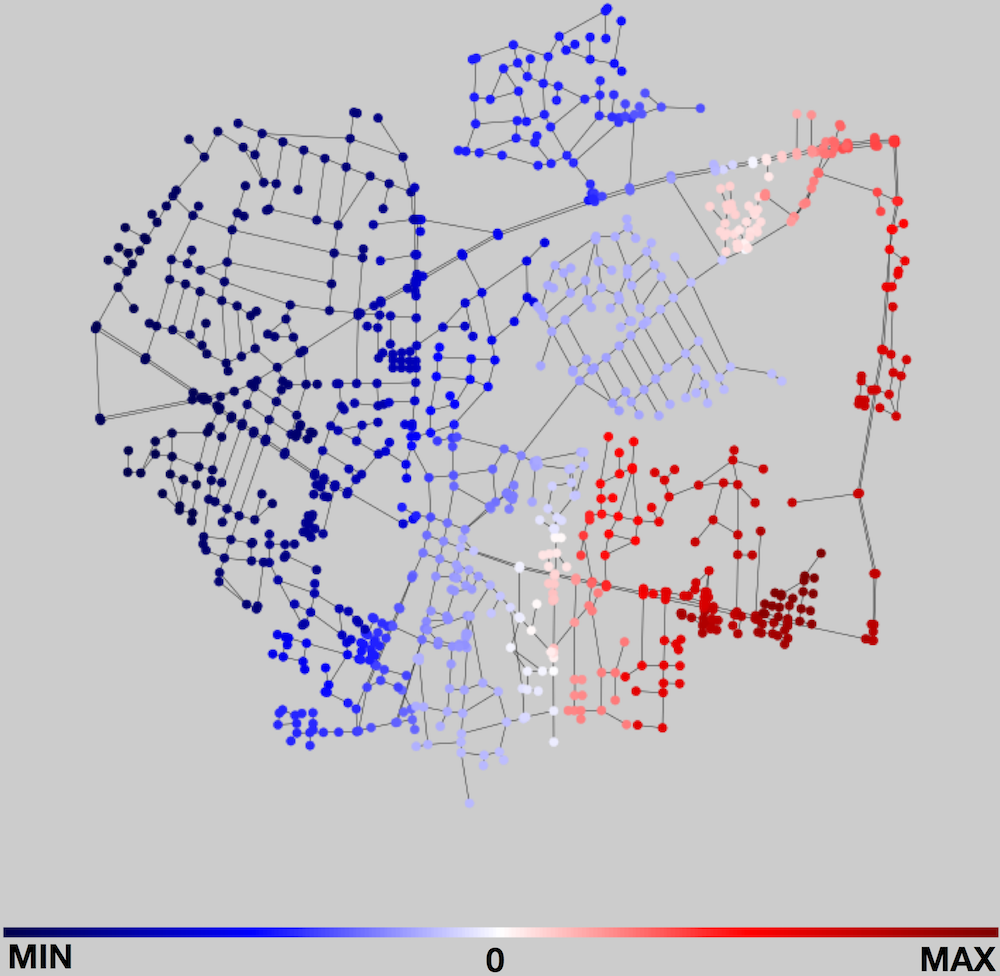}
	\end{subfigure}
	\begin{subfigure}
		\centering
		\includegraphics[width=1.355in]{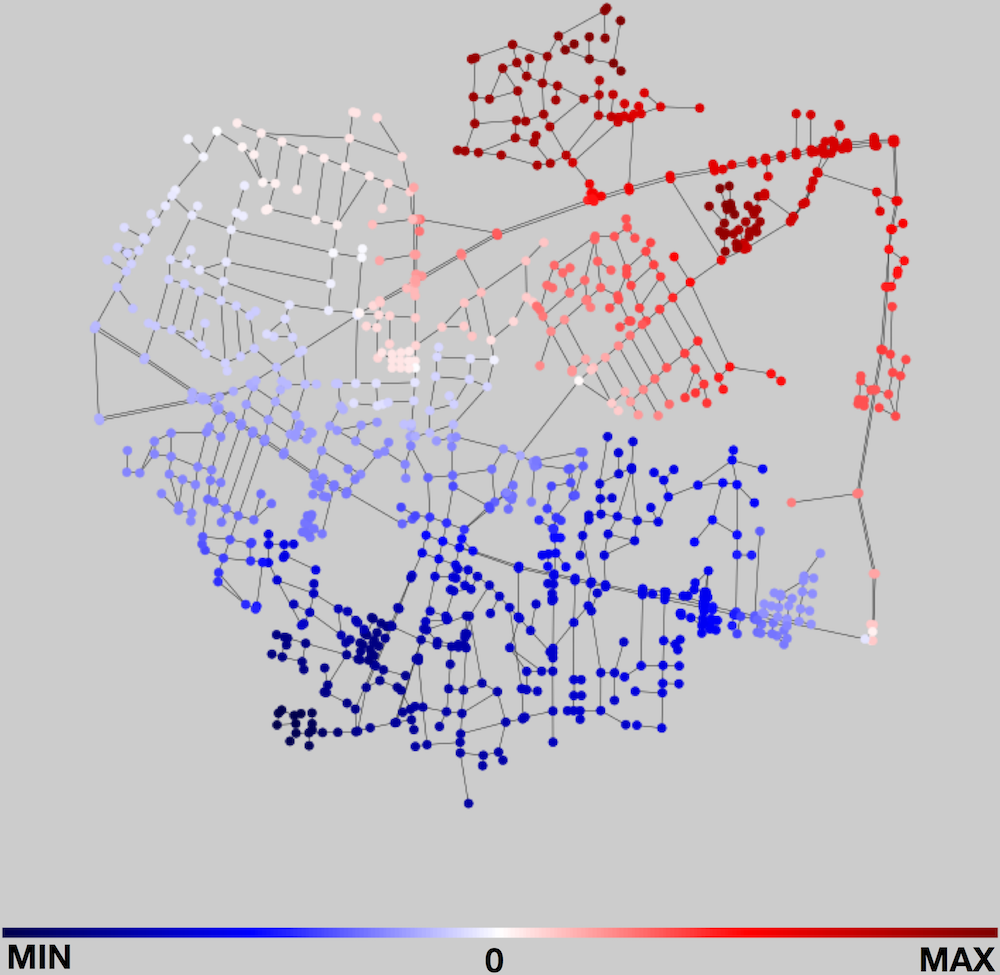}
	\end{subfigure}
	\begin{subfigure}
		\centering
		\includegraphics[width=1.355in]{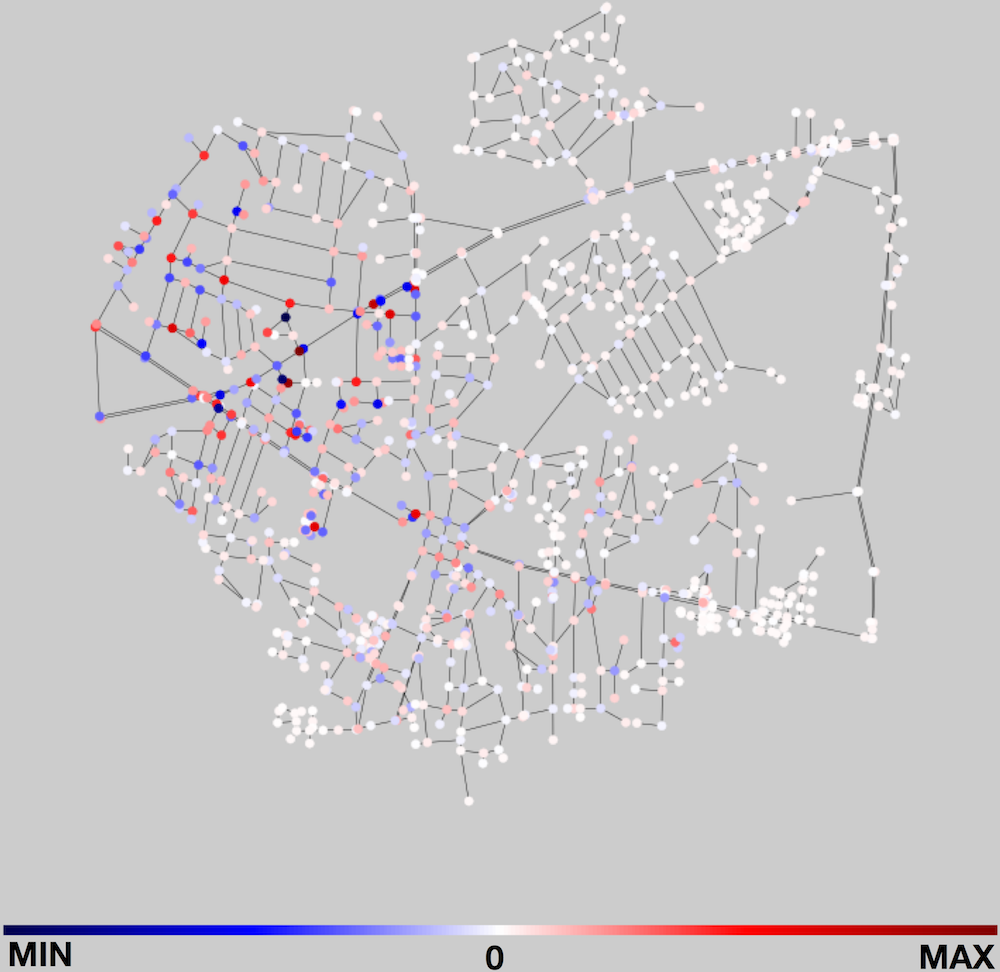}
	\end{subfigure}
	\begin{subfigure}
		\centering
		\includegraphics[width=1.355in]{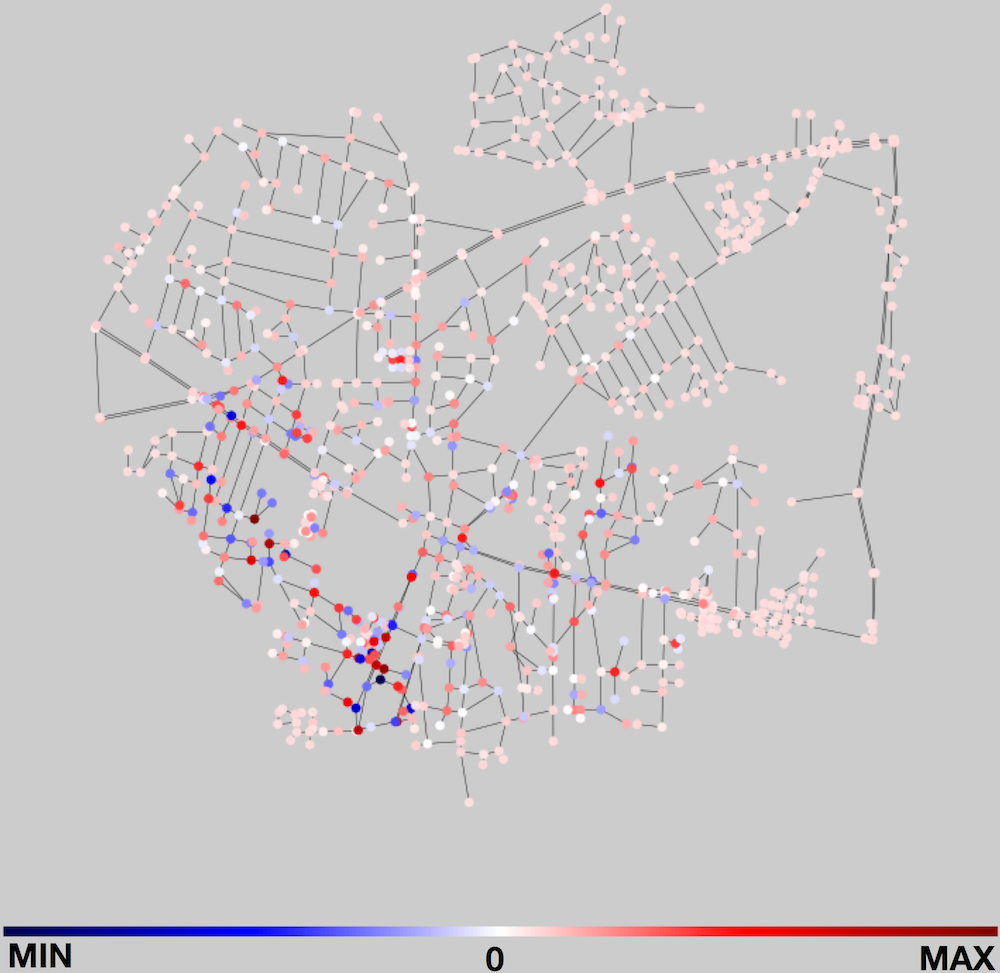}
	\end{subfigure}
	\begin{subfigure}
		\centering
		\includegraphics[width=1.355in]{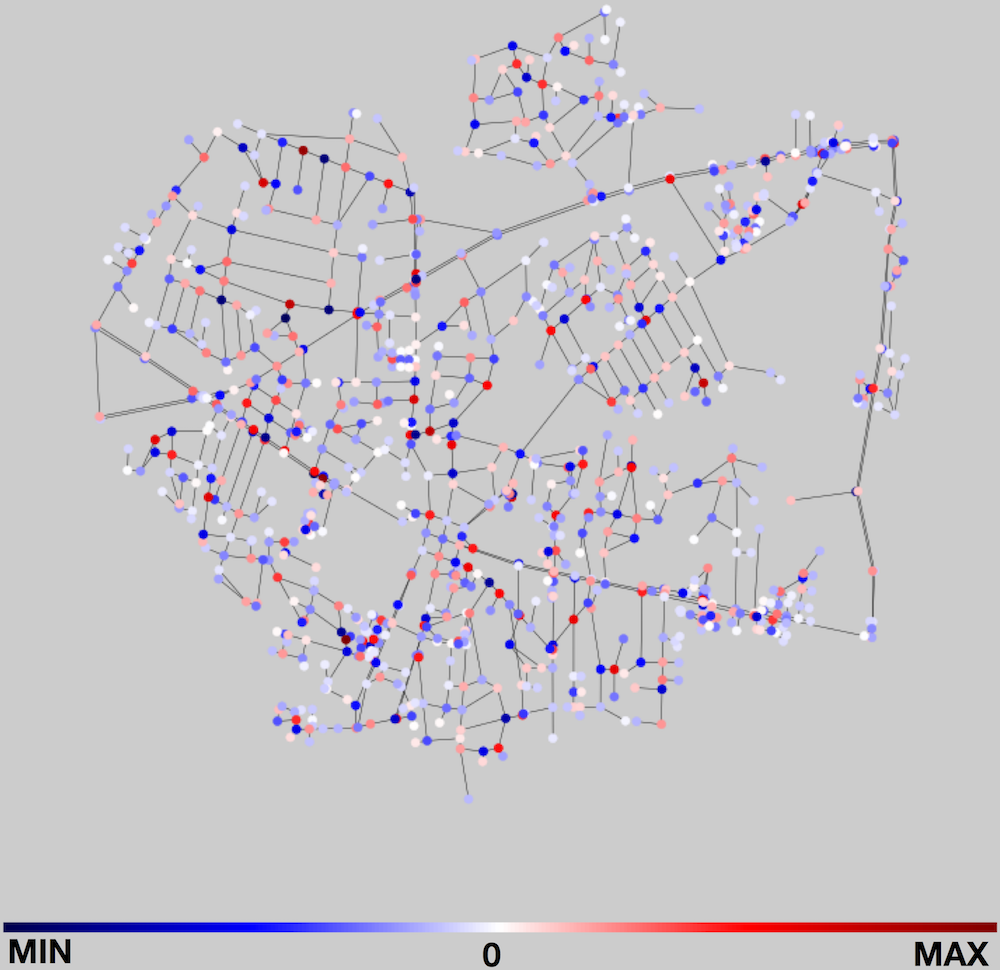}
	\end{subfigure}
	\caption{\textbf{Top line}: Minnesota road network. From left to right are $\varphi_{1}$,$\varphi_{2}$,$\varphi_{1001}$,$\varphi_{1002}$, and $\zeta$. $\mathcal{E}_{\varphi_{1}}$=0.00084， $\mathcal{E}_{\varphi_{2}}$=0.00207， $\mathcal{E}_{\varphi_{2001}}$=4.03932， $\mathcal{E}_{\varphi_{2002}}$=4.04661, $\mathcal{E}_{\zeta}$=15384.10112.
	\textbf{Bottom line}: Fairfax road network. From left to right are $\varphi_{1}$,$\varphi_{2}$,$\varphi_{701}$,$\varphi_{702}$, and $\zeta$. $\mathcal{E}_{\varphi_{1}}$=0.00250， $\mathcal{E}_{\varphi_{2}}$=0.00296， $\mathcal{E}_{\varphi_{701}}$=3.82741， $\mathcal{E}_{\varphi_{702}}$=3.81540, $\mathcal{E}_{\zeta}$=6376.54224
	}
	\label{minne_eig}
\end{figure*}

\begin{table}[hbt]
	\centering
	\begin{tabular}{l|c|c}
		\toprule \hline
		&  $|x|$ & $sign(x)$ \\ \hline
		Remez (spectral)  & 4.531041e-6 &  0.057233\\ 
		Remez (vertex)  &  0.000105 &  0.109641\\ \hline
		RNet w/o Remez (spectral) & 0.000145 &  1.364790\\ 
		RNet w/o Remez (vertex) & 0.000252 &  0.887602  \\ \hline
		RNet w/ Remez (spectral)  & 1.981569e-6 &  0.010645\\ 
		RNet w/ Remez (vertex)  & 9.569891e-5 &  0.093268\\ \hline
		Improved (spectral)  & 56.26\% &  81.39\%  \\ 
		Improved (vertex)  & 9.37\% &  14.93\% \\ \hline
		\bottomrule
	\end{tabular}
	\caption{Remez and RationalNet on 1000-node graph: MSE improvement in spectral and vertex domain}
	\label{remez_init}
\end{table}
Since RationalNet initializes parameters by a relaxed Remez algorithm, we analyze the performance of neural networks and Remez respectively. As shown in Table \ref{remez_init}, the first two rows show the MSE of Remez only. Compared with Remez algorithm, RationalNet without Remez initialization(3rd and 4th lines) performed badly. On the contrary, RationalNet with Remez initialization(5th and 6th lines) improved the Remez by 56.26\% and 81.39\%(7th line) for $|x|$ and $sign(x)$ separately in spectral domain, which also reduces their MSE in vertex domain by 9.37\% and 14.93\%(8th line). This result illustrates that Remez and RationalNet cannot find the optimum independently. Therefore, it is reasonable to integrate these two methods for optimizing the coefficients.

%
%
%
%
%

\subsection{Case study on real-world scenario}
In this section, we study a traffic congestion signal on Minnesota state-level road network \footnote{https://www.cise.ufl.edu/research/sparse/matrices/Gleich/minnesota.html} and Fairfax county-level road network VA\footnote{https://github.com/gboeing/osmnx}\cite{boeing2017osmnx}. Specifically, the signal is a high-pass filtering which can be written as $\zeta=\frac{sign(x-0.5)+1}{2}$ in Fourier domain. $\zeta$ is a threshold function sets the output to 0 when normalized eigenvalues $\in [0,0.5)$, and 1 for $\in (0.5, 1]$. Therefore, this function filter out signal of low frequency. 
The physical meaning of the convolutional operation is a weight function that chooses the eigenbasis($\varphi_{i}$) to fit the traffic signal $Y$. 
\begin{table}[]
	\scriptsize
	\centering
	\begin{tabular}{l|cc|cc}
		\toprule \hline
		Method & S-ERR(FF) & V-ERR(FF) &  S-ERR(MI) & V-ERR(MI)  \\ \hline
		SVR-R  &.0364$\pm$.0000&.0406$\pm$.0000  &.0393$\pm$.0000&.0358$\pm$.0000 \\
		SVR-L &.0652$\pm$.0000&.0599$\pm$.0000 &.0670$\pm$.0000&.0627$\pm$.0000 \\
		SVR-P &.1226$\pm$.0000&.1014$\pm$.0000 &.0518$\pm$.0000&.0499$\pm$.0000 \\
		LR &.0640$\pm$.0000&.0595$\pm$.0000  &.0662$\pm$.0000&.0621$\pm$.0000 \\
		RR &.0639$\pm$.0000&.0595$\pm$.0000 &.0662$\pm$.0000&.0621$\pm$.0000 \\
		LASSO &.2026$\pm$.0000&.2030$\pm$.0000 &.2141$\pm$.0000&.2138$\pm$.0000 \\
		EN &.1595$\pm$.0000&.1594$\pm$.0000 &.1609$\pm$.0000&.1592$\pm$.0000 \\
		OMP &.0640$\pm$.0000&.0595$\pm$.0000 &.0662$\pm$.0000&.0621$\pm$.0000 \\
		BR &.0640$\pm$.0000&.0595$\pm$.0000 &.0662$\pm$.0000&.0621$\pm$.0000 \\
		ARD  &.0640$\pm$.0000&.0595$\pm$.0000 &.0662$\pm$.0000&.0621$\pm$.0000 \\
		SGD &.0639$\pm$.0001&.0598$\pm$.0000  &.0664$\pm$.0000&.0622$\pm$.0000 \\
		PAR &.4960$\pm$.3273&.4948$\pm$.3200 &.4255$\pm$.4575&.4222$\pm$.4588 \\
		Huber &.0646$\pm$.0000&.0597$\pm$.0000 &.0666$\pm$.0000&.0624$\pm$.0000 \\
		PolyFit &.0346$\pm$.0000&.0382$\pm$.0000 &.0384$\pm$.0000&.0346$\pm$.0000 \\
		\hline
		ChebNet&.0468$\pm$.0006& .0468$\pm$.0006 & .2336$\pm$.0094&.2336$\pm$.0094\\
		PolyNet& .0468$\pm$.0006& .0468$\pm$.0006 & .0490$\pm$.0049&.0490$\pm$.0009 \\
		RNet & \textbf{.0064$\pm$.0007} &\textbf{.0064$\pm$.0007} &\textbf{.0046$\pm$.0012}&\textbf{.0046$\pm$.0006}\\ \hline
		\bottomrule
	\end{tabular}
	\caption{Regression comparison on Fairfax(FF) and Minnesota(MI) road networks. s-err indicates error in spectral domain, while v-err represents error in vertex domain.}
	\label{minne_mse}
\end{table}
The top line of Fig. \ref{minne_eig} shows several examples in eigen space of Minnesota road networks. First two sub figures are the 2nd and 3rd eigenvector $\varphi_{1}, \varphi_{2}$ on vertex domain: $\varphi_{1}$ emphasizes the south of Minnesota(red area), while $\varphi_{2}$ highlights the capital St. Paul and its biggest city Minneapolis. Note that the 1st eigenvector $\varphi_{0}$ is a constant vector for any connected graph. $\varphi_{1}, \varphi_{2}$ correspond to $\lambda_{1}, \lambda_{1}$, which represent the first two lowest frequencies. As these figures show, low frequencies represent smooth signals, which means that the neighbors of each node are likely to have similar signal value. By contrast, high-frequency basis captures non-smooth component as the 3rd and 4th sub figures show: signal values vary frequently in some areas. Combining top 50\% high-frequency eigenbasis, the last sub figure shows the $\zeta$ signal on the graph. In addition, the degree of non-smoothness of signal regard graph structure can be evaluated quantitatively by Dirichlet energy(\cite{shuman2013emerging}): $\mathcal{E}_{\varphi_{i}}=\varphi_{i}^{\intercal}\Ll\varphi_{i}$. Dirichlet energy of examples is shown in the caption of Fig. \ref{minne_eig}. Eigenvectors of low frequency($\varphi_{1,2}$) are smooth, so their Dirichlet energies are low. While high-frequency eigenvectors are less smooth since their Dirichlet energy is higher(around 4.04). Summing up the top 50\% high frequencies, the Dirichlet energy of $\zeta$ in the last sub figure is very large(15384.10). The bottom line of Fig. \ref{minne_eig} shows similar examples from Fairfax road networks. $\varphi_{1}$ highlights Fair City Mall(red area) and the road to this mall, while $\varphi_{2}$ underlines Fairfax Circle Shopping Center and a residential neighborhoods nearby. Similarly, $\varphi_{701}$ and $\varphi_{702}$ show two non-smooth graph signals. Summing up top 50\% high frequencies, the 5th sub figure exhibits an extremely non-smooth signal. Characterizing non-smooth graph signal or high frequencies is not a trivial task. Therefore, approximating this high pass filtering is significantly challenging.
\begin{figure}[!hbtp]
	\includegraphics[width=3.5in]{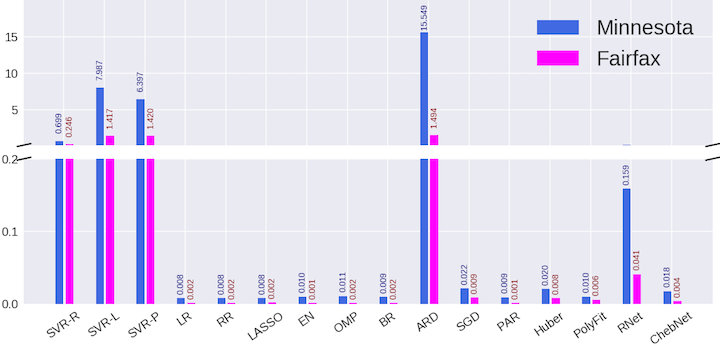}
	\caption{Comparison of average running time in seconds.}
	\label{time_cost}
\end{figure}
Table \ref{minne_mse} shows similar results as in synthetic experiments: The proposed method still performed much better(3e-5) than the baselines. PolyFit achieved the second best level(0.0008), ChebNet, PolyNet and SVR(RBF) are generally good(0.0039,0.0039 and 0.0055), this is probably because they fitted the target with curves. The methods using straight lines have highest level of MSE(around 0.01). The results on another dataset, Fairfax road network, also show that RationalNet has huge advantage beyond the baselines.

Fig. \ref{time_cost} shows the comparison of running time on two real-world networks. Minnesota dataset contains 2642 vertexes, while Fairfax network consists of 993 nodes. Most baseline methods are efficient such as LR, RR, LASSO, EN, OMP, BR, PAR. They finish computing within around 0.01 second on Minnesota graph and 0.002 second on Fairfax graph. SGD, PolyFit, and Huber only require 0.02 and 0.01 for Minnesota and Fairfax network respectively. SVR group performed slower, but they complete the calculation within 10 seconds for Minnesota graph and 2 seconds for Fairfax. ARD needs around 15 seconds and 1.4 seconds separately, which is the slowest baseline. Note that the number for RationalNet and ChebNet in Fig. \ref{time_cost} is the time for each iteration. RationalNet took 0.159 seconds for one update on Minnesota network, and 0.041 seconds on Fairfax network. In practice, RationalNet often converges within 300 iterations, which takes less than one minute for both datasets. Due to the complexity of computation, it is natural that RationalNet is slower than its counterpart ChebNet and several baselines. However, it shows that our algorithm can run reasonably fast in real-world datasets. 
Our case study on real-world graph justifies that RationalNet can accurately estimate the high pass filter within a reasonable time.

\section{Conclusion}\label{sec:conclusion}
In this paper, we have introduced a neural network model for graph signal recovering. To estimate jump discontinuity, a rational function is employed due to its powerful ability of approximation. The proposed method can avoid multiplication with the eigenvector matrix. With the help of a relaxed Remez algorithm, RationalNet can identify the optimal configuration. In theory, RationalNet obtains exponential convergence rate on jump signal, significantly fast than the polynomial-based approximation. Experiments on synthetic datasets suggest that the proposed RationalNet model is capable of model typical jump function accurately.

\section{Acknowledge}
Rongjie Lai’s research is supported in part by NSF DMS-1522645 and an NSF Career award DMS-1752934. Feng's research is supported in part by NSF IIS-1441479, NSF IIS-1815696, and an NSF CAREER award IIS-1750911. We gratefully acknowledge the support of NVIDIA Corporation with the donation of the Nvidia Titan V GPU used for this research.

\label{Bibliography}
\bibliographystyle{IEEEtran}
\bibliography{IEEEtran} 

\begin{thebibliography}{10}
\providecommand{\url}[1]{#1}
\csname url@samestyle\endcsname
\providecommand{\newblock}{\relax}
\providecommand{\bibinfo}[2]{#2}
\providecommand{\BIBentrySTDinterwordspacing}{\spaceskip=0pt\relax}
\providecommand{\BIBentryALTinterwordstretchfactor}{4}
\providecommand{\BIBentryALTinterwordspacing}{\spaceskip=\fontdimen2\font plus
\BIBentryALTinterwordstretchfactor\fontdimen3\font minus
  \fontdimen4\font\relax}
\providecommand{\BIBforeignlanguage}[2]{{%
\expandafter\ifx\csname l@#1\endcsname\relax
\typeout{** WARNING: IEEEtran.bst: No hyphenation pattern has been}%
\typeout{** loaded for the language `#1'. Using the pattern for}%
\typeout{** the default language instead.}%
\else
\language=\csname l@#1\endcsname
\fi
#2}}
\providecommand{\BIBdecl}{\relax}
\BIBdecl

\bibitem{liang2016deep}
S.~Liang and R.~Srikant, ``Why deep neural networks for function
  approximation?'' in \emph{International Conference on Learning
  Representations (ICLR)}, 2017.

\bibitem{telgarsky2017neural}
M.~Telgarsky, ``Neural networks and rational functions,'' in
  \emph{International Conference on Machine Learning}, 2017, pp. 3387--3393.

\bibitem{lazer2009life}
D.~Lazer, A.~S. Pentland, L.~Adamic, S.~Aral, A.~L. Barabasi, D.~Brewer,
  N.~Christakis, N.~Contractor, J.~Fowler, M.~Gutmann \emph{et~al.}, ``Life in
  the network: the coming age of computational social science,'' \emph{Science
  (New York, NY)}, vol. 323, no. 5915, p. 721, 2009.

\bibitem{bell1997transportation}
M.~G. Bell, Y.~Iida \emph{et~al.}, ``Transportation network analysis,'' 1997.

\bibitem{newman2002spread}
M.~E. Newman, ``Spread of epidemic disease on networks,'' \emph{Physical review
  E}, vol.~66, no.~1, p. 016128, 2002.

\bibitem{marx2012high}
V.~Marx, ``High-throughput anatomy: charting the brain's networks,''
  \emph{Nature}, vol. 490, no. 7419, p. 293, 2012.

\bibitem{davidson2002genomic}
E.~H. Davidson, J.~P. Rast, P.~Oliveri, A.~Ransick, C.~Calestani, C.-H. Yuh,
  T.~Minokawa, G.~Amore, V.~Hinman, C.~Arenas-Mena \emph{et~al.}, ``A genomic
  regulatory network for development,'' \emph{science}, vol. 295, no. 5560, pp.
  1669--1678, 2002.

\bibitem{drew2008diagnosing}
J.~H. Drew and H.~Liu, ``Diagnosing fault patterns in telecommunication
  networks,'' Sep.~23 2008, uS Patent 7,428,300.

\bibitem{lin2015learning}
Y.~Lin, Z.~Liu, M.~Sun, Y.~Liu, and X.~Zhu, ``Learning entity and relation
  embeddings for knowledge graph completion.'' in \emph{AAAI}, vol.~15, 2015,
  pp. 2181--2187.

\bibitem{krizhevsky2012imagenet}
A.~Krizhevsky, I.~Sutskever, and G.~E. Hinton, ``Imagenet classification with
  deep convolutional neural networks,'' in \emph{Advances in neural information
  processing systems}, 2012, pp. 1097--1105.

\bibitem{collobert2008unified}
R.~Collobert and J.~Weston, ``A unified architecture for natural language
  processing: Deep neural networks with multitask learning,'' in
  \emph{Proceedings of the 25th international conference on Machine
  learning}.\hskip 1em plus 0.5em minus 0.4em\relax ACM, 2008, pp. 160--167.

\bibitem{graves2013speech}
A.~Graves, A.-r. Mohamed, and G.~Hinton, ``Speech recognition with deep
  recurrent neural networks,'' in \emph{Acoustics, speech and signal processing
  (icassp), 2013 ieee international conference on}.\hskip 1em plus 0.5em minus
  0.4em\relax IEEE, 2013, pp. 6645--6649.

\bibitem{mnih2015human}
V.~Mnih, K.~Kavukcuoglu, D.~Silver, A.~A. Rusu, J.~Veness, M.~G. Bellemare,
  A.~Graves, M.~Riedmiller, A.~K. Fidjeland, G.~Ostrovski \emph{et~al.},
  ``Human-level control through deep reinforcement learning,'' \emph{Nature},
  vol. 518, no. 7540, p. 529, 2015.

\bibitem{silver2016mastering}
D.~Silver, A.~Huang, C.~J. Maddison, A.~Guez, L.~Sifre, G.~Van Den~Driessche,
  J.~Schrittwieser, I.~Antonoglou, V.~Panneershelvam, M.~Lanctot \emph{et~al.},
  ``Mastering the game of go with deep neural networks and tree search,''
  \emph{nature}, vol. 529, no. 7587, pp. 484--489, 2016.

\bibitem{bruna2013spectral}
J.~Bruna, W.~Zaremba, A.~Szlam, and Y.~LeCun, ``Spectral networks and locally
  connected networks on graphs,'' in \emph{International Conference on Learning
  Representations (ICLR)}, 2013.

\bibitem{henaff2015deep}
M.~Henaff, J.~Bruna, and Y.~LeCun, ``Deep convolutional networks on
  graph-structured data,'' \emph{arXiv preprint arXiv:1506.05163}, 2015.

\bibitem{hammond2011wavelets}
D.~K. Hammond, P.~Vandergheynst, and R.~Gribonval, ``Wavelets on graphs via
  spectral graph theory,'' \emph{Applied and Computational Harmonic Analysis},
  vol.~30, no.~2, pp. 129--150, 2011.

\bibitem{defferrard2016convolutional}
M.~Defferrard, X.~Bresson, and P.~Vandergheynst, ``Convolutional neural
  networks on graphs with fast localized spectral filtering,'' in
  \emph{Advances in Neural Information Processing Systems}, 2016, pp.
  3844--3852.

\bibitem{kipf2017semi}
T.~N. Kipf and M.~Welling, ``Semi-supervised classification with graph
  convolutional networks,'' in \emph{International Conference on Learning
  Representations (ICLR)}, 2017.

\bibitem{DBLP:journals/corr/abs-1801-07606}
Q.~Li, Z.~Han, and X.~Wu, ``Deeper insights into graph convolutional networks
  for semi-supervised learning,'' in \emph{AAAI}, 2018.

\bibitem{trefethen2013approximation}
L.~N. Trefethen, \emph{Approximation theory and approximation practice}.\hskip
  1em plus 0.5em minus 0.4em\relax Siam, 2013, vol. 128.

\bibitem{ahlfors1953complex}
L.~V. Ahlfors, ``Complex analysis: an introduction to the theory of analytic
  functions of one complex variable,'' \emph{New York, London}, p. 177, 1953.

\bibitem{pachon2010algorithms}
R.~Pachon, ``Algorithms for polynomial and rational approximation,'' Ph.D.
  dissertation, University of Oxford, 2010.

\bibitem{powell1981approximation}
M.~J.~D. Powell, \emph{Approximation theory and methods}.\hskip 1em plus 0.5em
  minus 0.4em\relax Cambridge university press, 1981.

\bibitem{cohen2011numerical}
H.~Cohen, \emph{Numerical approximation methods}.\hskip 1em plus 0.5em minus
  0.4em\relax Springer, 2011.

\bibitem{petrushev2011rational}
P.~P. Petrushev and V.~A. Popov, \emph{Rational approximation of real
  functions}.\hskip 1em plus 0.5em minus 0.4em\relax Cambridge University
  Press, 2011, vol.~28.

\bibitem{achieser2013theory}
N.~I. Achieser, \emph{Theory of approximation}.\hskip 1em plus 0.5em minus
  0.4em\relax Courier Corporation, 2013.

\bibitem{ziegel1987numerical}
E.~Ziegel, ``Numerical recipes: the art of scientific computing,'' 1987.

\bibitem{boyd2001chebyshev}
J.~P. Boyd, \emph{Chebyshev and Fourier spectral methods}.\hskip 1em plus 0.5em
  minus 0.4em\relax Courier Corporation, 2001.

\bibitem{mason2002chebyshev}
J.~C. Mason and D.~C. Handscomb, \emph{Chebyshev polynomials}.\hskip 1em plus
  0.5em minus 0.4em\relax CRC Press, 2002.

\bibitem{remez1934determination}
E.~Y. Remez, ``Sur la d{\'e}termination des polyn{\^o}mes d’approximation de
  degr{\'e} donn{\'e}e,'' \emph{Comm. Soc. Math. Kharkov}, vol.~10, pp. 41--63,
  1934.

\bibitem{chung1997spectral}
F.~R. Chung, \emph{Spectral graph theory}.\hskip 1em plus 0.5em minus
  0.4em\relax American Mathematical Soc., 1997, no.~92.

\bibitem{grone1990laplacian}
R.~Grone, R.~Merris, and V.~S. Sunder, ``The laplacian spectrum of a graph,''
  \emph{SIAM Journal on Matrix Analysis and Applications}, vol.~11, no.~2, pp.
  218--238, 1990.

\bibitem{das2004laplacian}
K.~C. Das, ``The laplacian spectrum of a graph,'' \emph{Computers \&
  Mathematics with Applications}, vol.~48, no. 5-6, pp. 715--724, 2004.

\bibitem{bronstein2017geometric}
M.~M. Bronstein, J.~Bruna, Y.~LeCun, A.~Szlam, and P.~Vandergheynst,
  ``Geometric deep learning: going beyond euclidean data,'' \emph{IEEE Signal
  Processing Magazine}, vol.~34, no.~4, pp. 18--42, 2017.

\bibitem{shuman2013emerging}
D.~I. Shuman, S.~K. Narang, P.~Frossard, A.~Ortega, and P.~Vandergheynst, ``The
  emerging field of signal processing on graphs: Extending high-dimensional
  data analysis to networks and other irregular domains,'' \emph{IEEE Signal
  Processing Magazine}, vol.~30, no.~3, pp. 83--98, 2013.

\bibitem{shuman2016vertex}
D.~I. Shuman, B.~Ricaud, and P.~Vandergheynst, ``Vertex-frequency analysis on
  graphs,'' \emph{Applied and Computational Harmonic Analysis}, vol.~40, no.~2,
  pp. 260--291, 2016.

\bibitem{zhu2012approximating}
X.~Zhu and M.~Rabbat, ``Approximating signals supported on graphs,'' in
  \emph{Acoustics, Speech and Signal Processing (ICASSP), 2012 IEEE
  International Conference on}.\hskip 1em plus 0.5em minus 0.4em\relax IEEE,
  2012, pp. 3921--3924.

\bibitem{taubin1995signal}
G.~Taubin, ``A signal processing approach to fair surface design,'' in
  \emph{Proceedings of the 22nd annual conference on Computer graphics and
  interactive techniques}.\hskip 1em plus 0.5em minus 0.4em\relax ACM, 1995,
  pp. 351--358.

\bibitem{antiabook2004}
P.~Macgregor, ``Numerical methods for scientists and engineers by h. m.
  antia,'' vol.~88, pp. 404--405, 01 2004.

\bibitem{newman1964rational}
D.~J. Newman \emph{et~al.}, ``Rational approximation to $| x| $.'' \emph{The
  Michigan Mathematical Journal}, vol.~11, no.~1, pp. 11--14, 1964.

\bibitem{lorentz1996constructive}
G.~G. Lorentz, M.~von Golitschek, and Y.~Makovoz, \emph{Constructive
  approximation: advanced problems}.\hskip 1em plus 0.5em minus 0.4em\relax
  Springer Berlin, 1996, vol. 304.

\bibitem{achiezertheory}
N.~Achiezer, ``Theory of approximation (transl. by cj hyman), ungar, new york,
  1956,'' \emph{Google Scholar}.

\bibitem{Varga1985}
\BIBentryALTinterwordspacing
R.~S. Varga and A.~J. Carpenter, ``On the bernstein conjecture in approximation
  theory,'' \emph{Constructive Approximation}, vol.~1, no.~1, pp. 333--348, Dec
  1985. [Online]. Available: \url{https://doi.org/10.1007/BF01890040}
\BIBentrySTDinterwordspacing

\bibitem{stigler1974gergonne}
S.~M. Stigler, ``Gergonne's 1815 paper on the design and analysis of polynomial
  regression experiments,'' \emph{Historia Mathematica}, vol.~1, no.~4, pp.
  431--439, 1974.

\bibitem{crammer2006online}
K.~Crammer, O.~Dekel, J.~Keshet, S.~Shalev-Shwartz, and Y.~Singer, ``Online
  passive-aggressive algorithms,'' \emph{Journal of Machine Learning Research},
  vol.~7, no. Mar, pp. 551--585, 2006.

\bibitem{tibshirani1996regression}
R.~Tibshirani, ``Regression shrinkage and selection via the lasso,''
  \emph{Journal of the Royal Statistical Society. Series B (Methodological)},
  pp. 267--288, 1996.

\bibitem{smola2004tutorial}
A.~J. Smola and B.~Sch{\"o}lkopf, ``A tutorial on support vector regression,''
  \emph{Statistics and computing}, vol.~14, no.~3, pp. 199--222, 2004.

\bibitem{ng2004feature}
A.~Y. Ng, ``Feature selection, l 1 vs. l 2 regularization, and rotational
  invariance,'' in \emph{Proceedings of the twenty-first international
  conference on Machine learning}.\hskip 1em plus 0.5em minus 0.4em\relax ACM,
  2004, p.~78.

\bibitem{mackay1992bayesian}
D.~J. MacKay, ``Bayesian interpolation,'' \emph{Neural computation}, vol.~4,
  no.~3, pp. 415--447, 1992.

\bibitem{tipping2001sparse}
M.~E. Tipping, ``Sparse bayesian learning and the relevance vector machine,''
  \emph{Journal of machine learning research}, vol.~1, no. Jun, pp. 211--244,
  2001.

\bibitem{zou2005regularization}
H.~Zou and T.~Hastie, ``Regularization and variable selection via the elastic
  net,'' \emph{Journal of the Royal Statistical Society: Series B (Statistical
  Methodology)}, vol.~67, no.~2, pp. 301--320, 2005.

\bibitem{mallat1993matching}
S.~G. Mallat and Z.~Zhang, ``Matching pursuits with time-frequency
  dictionaries,'' \emph{IEEE Transactions on signal processing}, vol.~41,
  no.~12, pp. 3397--3415, 1993.

\bibitem{huber2011robust}
P.~J. Huber, ``Robust statistics,'' in \emph{International Encyclopedia of
  Statistical Science}.\hskip 1em plus 0.5em minus 0.4em\relax Springer, 2011,
  pp. 1248--1251.

\bibitem{boeing2017osmnx}
G.~Boeing, ``Osmnx: New methods for acquiring, constructing, analyzing, and
  visualizing complex street networks,'' \emph{Computers, Environment and Urban
  Systems}, vol.~65, pp. 126--139, 2017.

\end{thebibliography}

\end{document}